\def\eqref#1{equation~\ref{#1}}
\def\1{\bm{1}}
\def\vd{{\bm{d}}}
\def\vm{{\bm{m}}}
\def\vx{{\bm{x}}}
\def\vy{{\bm{y}}}
\def\vz{{\bm{z}}}
\DeclareMathAlphabet{\mathsfit}{\encodingdefault}{\sfdefault}{m}{sl}
\SetMathAlphabet{\mathsfit}{bold}{\encodingdefault}{\sfdefault}{bx}{n}
\def\gA{{\mathcal{A}}}
\def\gD{{\mathcal{D}}}
\def\gM{{\mathcal{M}}}
\def\gO{{\mathcal{O}}}
\def\gP{{\mathcal{P}}}
\def\gS{{\mathcal{S}}}
\def\gT{{\mathcal{T}}}
\def\gX{{\mathcal{X}}}
\def\gY{{\mathcal{Y}}}
\def\gZ{{\mathcal{Z}}}
\newcommand{\E}{\mathbb{E}}
\definecolor{textgray}{HTML}{6E6E73}
\patchcmd{\wrong@fontshape}{\@gobbletwo}{}{}{}
\numberwithin{equation}{section} 
\definecolor{light}{RGB}{125, 125, 125}
\crefname{tcb@cnt@pbox}{code}{code}
\Crefname{tcb@cnt@pbox}{Code}{Code}
\crefname{assumption}{assumption}{assumption}
\Crefname{assumption}{Assumption}{Assumptions}
\newtcolorbox[auto counter]{pbox}[2][]{
  colback=white,
  title=Code~\thetcbcounter: #2,
  #1,fonttitle=\sffamily,
  fontupper=\sffamily,
  arc=2pt,
  colframe=bgcolor,
  coltitle=fgcolor,
  colbacktitle=bgcolor,
  toptitle=0.25cm,
  bottomtitle=0.125cm
}
\newcommand\applefootnote[1]{%
  \begingroup
  \renewcommand\thefootnote{}%
  \renewcommand\@makefntext[1]{\noindent##1}%
  \footnote{#1}%
  \addtocounter{footnote}{-1}%
  \endgroup
}
\definecolor{cverbbg}{gray}{0.90}
\newtheorem{theorem}{Theorem}[section]
\newtheorem{lemma}[theorem]{Lemma}
\newtheorem{definition}{Definition}[section]
\newcommand{\err}{\mathrm{err}}
\newcommand{\supp}{\mathrm{supp}}
\newcommand{\ANS}{\mathrm{[ANS]}}
\newcommand{\TOOL}{\mathrm{[TOOL]}}
\newcommand{\sTOOL}{\mathrm{[\backslash TOOL]}}
\newcommand{\OBS}{\mathrm{[OBS]}}
\newcommand{\sOBS}{\mathrm{[\backslash OBS]}}
\newcommand{\THINK}{\mathrm{[THINK]}}
\newcommand{\sTHINK}{\mathrm{[\backslash THINK]}}
\newcommand{\EOS}{\mathrm{[EOS]}}
\newcommand{\enc}{\mathrm{enc}}
\newcommand{\STATE}{\mathrm{[STATE]}}
\newcommand{\sSTATE}{\mathrm{[\backslash STATE]}}
\newcommand{\abs}[1]{\left \lvert #1 \right \rvert}
\title{To Infinity and Beyond: Tool-Use Unlocks Length Generalization in State Space Models}
\author{Eran Malach}
\author{Omid Saremi}
\author{Sinead Williamson}
\author{Arwen Bradley}
\author{Aryo Lotfi}
\author{Emmanuel Abbe}
\author{Josh Susskind}
\author{Etai Littwin}
\affiliation{Apple}
\abstract{
State Space Models (SSMs) have become the leading alternative to Transformers for sequence modeling. Their primary advantage is efficiency in long-context and long-form generation, enabled by fixed-size memory and linear scaling of computational complexity. We begin this work by showing a simple theoretical result stating that SSMs \emph{cannot} accurately solve any ``truly long-form'' generation problem (in a sense we formally define), undermining their main competitive advantage. However, we show that this limitation can be mitigated by allowing SSMs \emph{interactive} access to external tools. In fact, we show that given the right choice of tool access \emph{and} problem-dependent training data, SSMs can learn to solve any tractable problem and generalize to arbitrary problem length/complexity (i.e., achieve \emph{length generalization}). Following our theoretical finding, we demonstrate that tool-augmented SSMs achieve remarkable length generalization on a variety of arithmetic, reasoning, and coding tasks. These findings highlight SSMs as a potential efficient alternative to Transformers in interactive tool-based and agentic settings.}
\date{\sffamily\today}
\begin{document}

\maketitle

\section{Introduction}

Transformers \citep{vaswani2017attention}, the main architecture powering large language models, have a well-known limitation: due to the attention mechanism, their computational complexity scales quadratically with the sequence length, and their memory scales linearly with length\footnote{While a naive implementation of attention requires quadratic memory complexity, efficient implementations such as Flash Attention \citep{dao2022flashattention} and KV caching enable close to linear complexity.}.
This quadratic dependency becomes a major limitation for tasks that require long-context and long-form generation.
As test-time scaling paradigms that involve the generation of long Chain of Thought (CoT) \citep{wei2022chain} have become the leading solution for improving reasoning capabilities \citep{jaech2024openai, guo2025deepseek},
the ability to efficiently generate long sequences becomes even more crucial.

\begin{figure}[h]
    \vspace{-0.3in}
    \centering
    \includegraphics[width=1.0\linewidth]{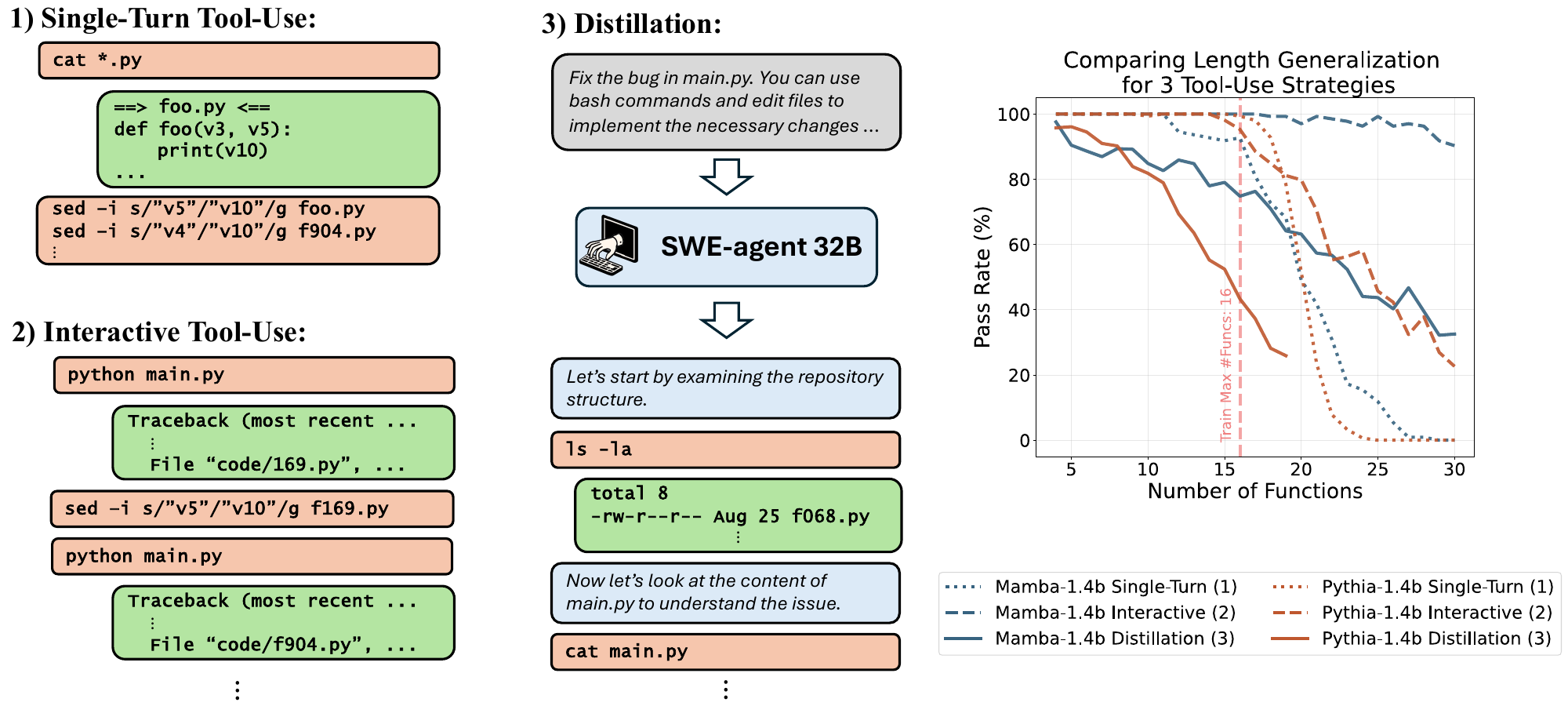}
    \caption{We finetune Mamba and Pythia (Transformer) on trajectories collected from different tool-use agents for solving a coding problem. 1) \textbf{Single-Turn Tool-Use:} Hard-coded agent that prints all the files in the repository and then outputs all the required changes. 2) \textbf{Interactive Tool-Use:} Hard-coded agent that iteratively runs the code, changes a few files, runs the code again etc. until all problems are resolved. 3) \textbf{Distillation:} SWE-agent Language Model \citep{yang2025swe} instructed to solve the bug in the code. Models are trained on codebases of up to 16 files (dashed red line), with context length 8,192, and evaluated on larger codebases with longer context. While Pythia outperforms Mamba on smaller codebases and single-turn tool use, Mamba displays favorable performance on large codebases when trained to imitate interactive agents (agents 2 and 3), extrapolating beyond the training distribution.}
    \label{fig:coding}
    
\end{figure}

To solve this limitation, various works suggested replacing the attention mechanism with other modules where memory and per-token compute are fixed as a function of the sequence length \citep{choromanski2020rethinking}. Examples of such architectures include variants of Linear Transformers \citep{katharopoulos2020transformers} and State Space Models \citep{gu2021efficiently} such as Mamba \citep{gu2023mamba, dao2024transformers}, DeltaNet \citep{yang2024parallelizing} and GatedDeltaNet \citep{yang2024gated}.
These architectures achieve performance similar to Transformers across a wide range of domains \citep{qu2024survey} at a lower inference cost. However, some works have also pointed out significant limitations of these architectures in certain tasks that involve memorization of long sequences and in-context learning \citep{jelassi2024repeat, park2024can, akyurek2024context}. Possibly due to these limitations, linear-time models are still not widely adopted as a replacement to Transformers.

The goal of this work is to understand the capabilities and limitations of SSMs, focusing on tasks that require long-form generation. We formally define long-form generation tasks to be problems where the effective number of \emph{outputs} grows with the complexity of the problem. We focus on such tasks as these are the tasks where SSMs display a clear benefit over Transformers in terms of inference efficiency.
However, we show that this efficiency comes at a cost of inherent performance degradation. Namely, we prove that SSMs \emph{fail} to solve long-form generation tasks when the complexity of the task increases beyond the capacity of the model, even if the model is allowed to generate CoT of any length. This limitation arises from the bounded memory of the model, which limits the expressive power when generating long sequences. This is in contrast with Transformers which, using CoT, can in principle solve any computationally tractable problem, utilizing their unbounded memory \citep{merrill2023expressive}. So, to solve long-form generation tasks we can either use Transformers and suffer quadratic scaling of compute, or use SSMs and suffer performance degradation. Another alternative is to use hybrid models that mix attention and SSM layers and have been recently shown to achieve state-of-the-art performance at large scale \citep{blakeman2025nemotron}. However, this ultimately does not eliminate the quadratic dependence on the sequence length.

Following the observation above, we explore another alternative: allowing SSMs to \emph{interactively} use external tools. LLMs are now increasingly used as agents that interact with external tools for solving tasks such as coding, math or question answering \citep{luo2025large, yehudai2025survey}. These tools can allow agents to query and read from external resources, and write information that can be used later. Therefore, such tool-use can naturally augment the internal memory of the model, allowing it access to practically unbounded external memory. We introduce a new theoretical framework for studying ReAct \citep{yao2023react} agents, and show that allowing SSMs access to external memory through interactive tool-use  makes them much more powerful. We prove that tool-augmented SSMs trained on task-specific trajectories can achieve \emph{length generalization} on any tractable long-form generation task. That is, we show that for any such task we can construct training data with tool-use trajectories such that a simple training paradigm learns to execute the task with high accuracy, even when evaluated beyond the length of the training data. Importantly, this result only holds for \emph{interactive} tool-use, and we show that \emph{single-turn} tool-use SSMs are still limited.

Experimentally, we show that SSMs trained to \emph{interactively} use external memory tools achieve length generalization on tasks such as arithmetic, logical reasoning and coding.
For example, a Mamba model trained to solve a simple coding task extrapolates to codebases larger than those seen during training when trained on trajectories with interactive tool-use (Figure \ref{fig:coding}). Additionally, a Mamba model trained to execute long-form multi-digit addition using pointer-based memory can generalize from 5-digit addition to 1,000-digit addition (Figure \ref{fig:addition_task}). We observe similar results on multiplication and on a logical reasoning task, and more modest extrapolation on solving the Tower of Hanoi task (a task which proved to be difficult for reasoning models, see \cite{shojaee2025illusion}). Taken together, our theoretical and experimental results highlight the potential advantage of using SSMs as agents with interactive tool access, instead of using them as standalone systems.

\begin{figure}
\vspace{-0.2in}
    \centering
    \includegraphics[width=0.59\linewidth]{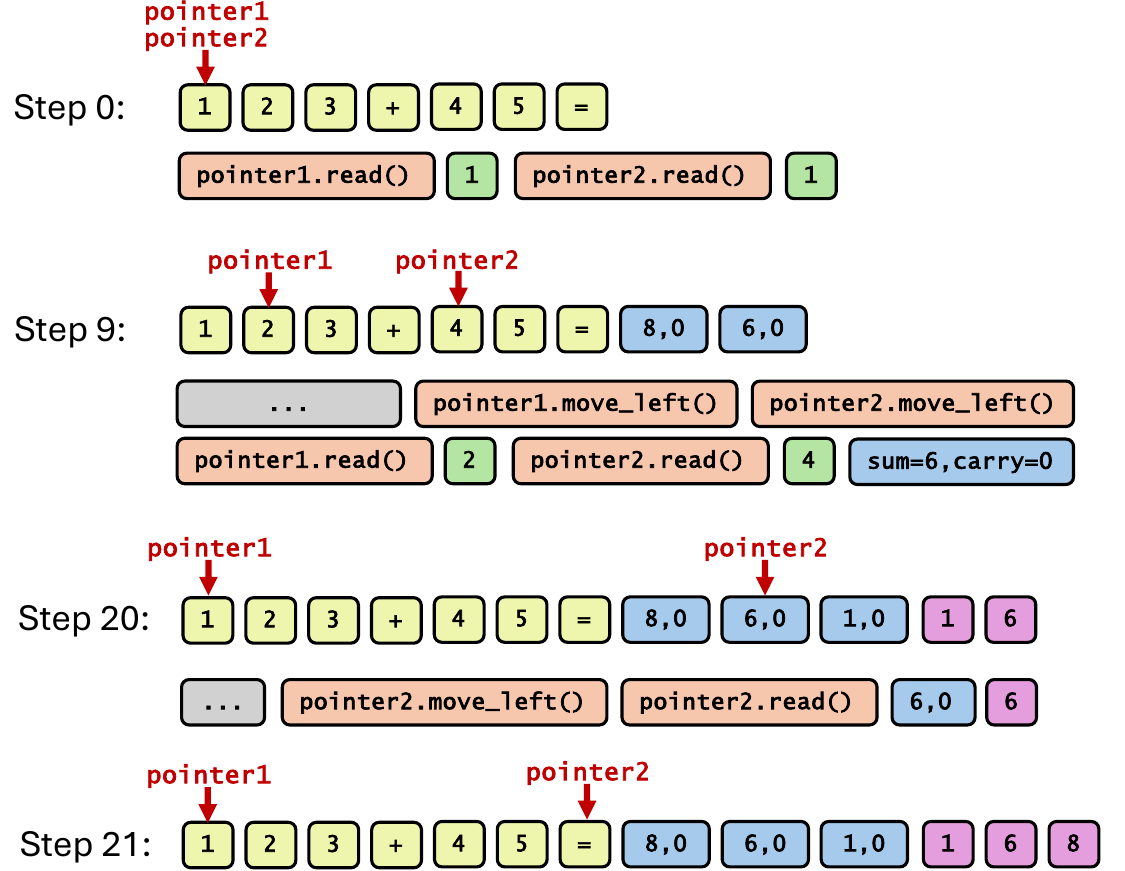}
    \includegraphics[width=0.4\linewidth]{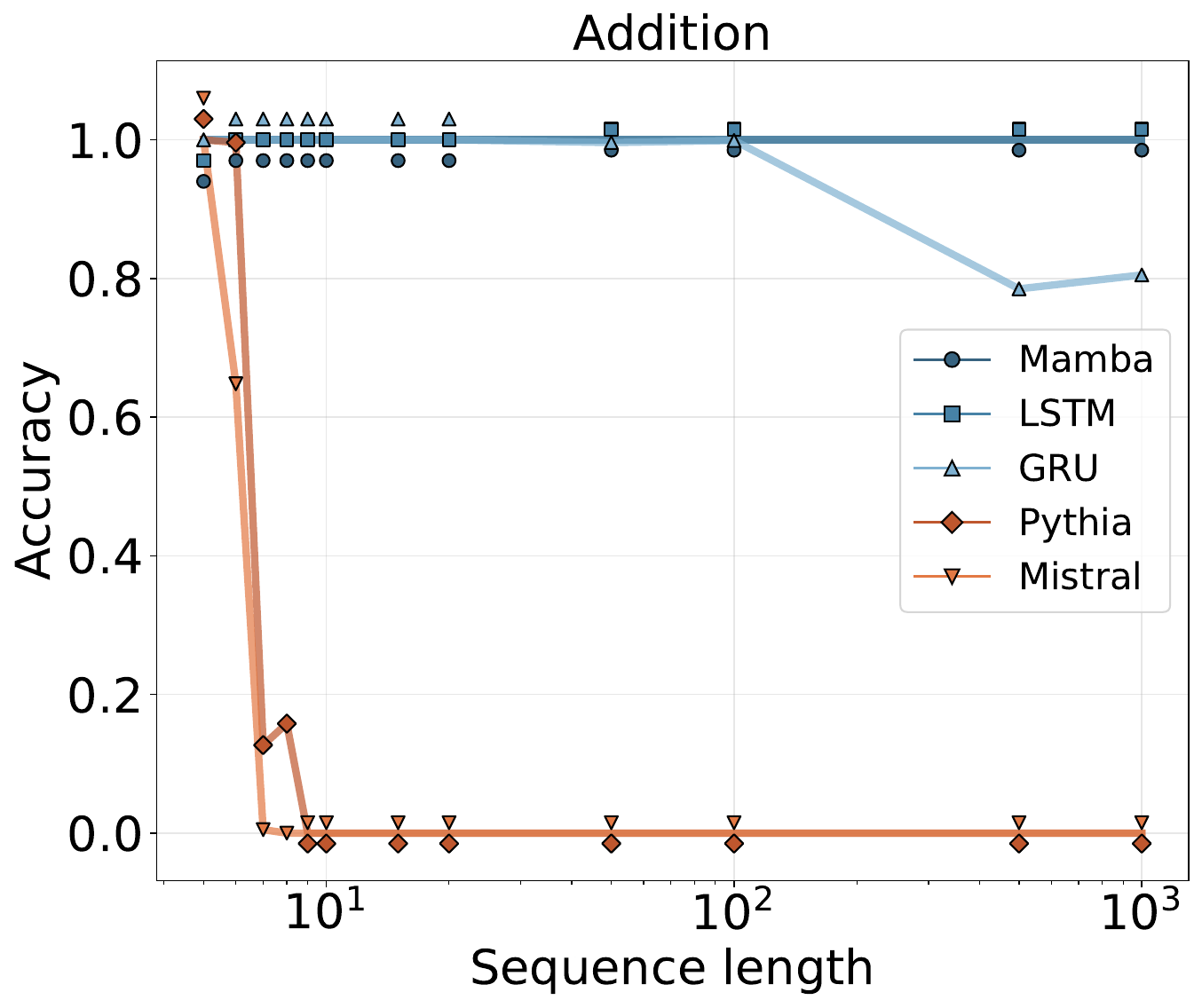}
    \caption{\textbf{Left:} Illustration of an interactive tool-use agent trajectory with pointer-based memory tool for solving multi-digit addition. The agent can generate \emph{thoughts} (blue), \emph{outputs} (purple) or \emph{commands} (orange), and receive \emph{observations} (green) from the memory tool. At each step, we show the state of the memory context on the top row, and below it show the sequence of generated tokens. \textbf{Right:} Accuracy of recurrent/SSM models (Mamba, LSTM, GRU) and Transformers (Pythia, Mistral) trained on trajectories for $\le$ 5-digit addition, evaluated on up to 1,000-digits (log scale).}
    \label{fig:addition_task}
\end{figure}

\subsection{Related Work}

\paragraph{Chain-of-Thought and Scratchpad}
When solving problems that require reasoning, LLMs are known to significantly benefit from generating a CoT, detailing the step-by-step process required for solving the target task \citep{wei2022chain,nye2021show}. Indeed, many datasets used for training models on mathematical problems include such CoT in the training data \citep{toshniwal2024openmathinstruct, toshniwal2024openmathinstruct2, cobbe2021training}. Theoretically, CoT is shown to improve both the expressive power of language models \citep{merrill2023expressive} and their optimization and learnability \citep{wies2022sub,malach2023auto}. Additionally, it was shown that choices of CoT training data that ``localize'' the computation enable efficient learning and length generalization \citep{abbe2024far}. In another work, using CoT that encodes the operation of a Turing machine was used to improve length generalization on various tasks \citep{hou2024universal}. In our work, we follow a similar approach for improving length generalization capabilities of language models. However, we focus on SSMs instead of Transformers, and  study the effect of interactive tool-use for improving learning and generalization.

\paragraph{Emulations and Neural Turing Machines}
The goal of learning to execute general algorithms with neural networks has been discussed in various works. \cite{abbe2020poly} and \cite{abbe2021power} show universality learning properties of poly-size neural networks trained by stochastic gradient descent. \cite{graves2014neural} introduces the Neural Turing Machine (NTM), a neural network that can simulate Turing machines and thus execute computable algorithms. NTMs were studied in different settings \citep{malekmohamadi2020review}, with some improvements such as the Neural GPU \citep{kaiser2015neural}, but were ultimately not widely adopted. Similar works suggested augmenting LSTMs \citep{hochreiter1997long} with external stack or tape \citep{deletang2022neural, joulin2015inferring}.
We use similar ideas to study algorithmic learning and length generalization capabilities of SSMs in the setting of tool-augmented interactive agents.

\paragraph{Length Generalization}
The problem of length generalization, training models on short/simple problems and evaluating them on longer/complex instances, has been studied in many works. These works often focus on training Transformers on arithmetic or algorithmic tasks such as sorting, copying or multi-digit addition \citep{jelassi2023length,nogueira2021investigating}. Different works suggest various techniques for improving length generalization capabilities of Transformers, including various choices of positional encodings and output format \citep{zhou2024transformers, cho2024position,mcleish2024transformers,kazemnejad2023impact,ruoss2023randomized}, scratchpads \citep{nye2021show,lee2023teaching,zhou2023algorithms, abbe2024far}, architecture \citep{ontanon2021making, li2023representations},  mixing different tasks for ``task hinting'' \citep{awasthi2023taskhinting} or using looped Transformers \citep{fan2024looped}.
Some works aim to give scientific or theoretical explanation to the capability and limitation of Transformers in extrapolating beyond the context length \citep{golowich2025role, zhou2023algorithms, huang2024formal, bhattamishra2022simplicity}. SSMs have been shown to display robust length generalization capabilities in certain cases. \cite{gu2023mamba} demonstrate that Mamba achieves significantly better length generalization performance compared to Transformers on some tasks. Other works show that the length extrapolation of SSMs can be significantly improved with modifications to the model \citep{ben2024decimamba} or the training pipeline \citep{ruiz2025understanding}. In this paper, we study the length generalization of SSMs when trained on data with tool-use trajectories. We show that SSMs can achieve perfect length generalization in this setting on various tasks.

\section{Theory}

In this section we formally define the notion of long-form generation tasks: tasks that require generating longer output sequences as their complexity increases. Following this, we define a family of functions that generalizes the class of SSMs, and theoretically analyze their limitation and capabilities in different tool-use settings.

\paragraph{Definitions and Notation.}
Fix some set $\gZ$ and some distribution $\gP$ over $\gZ$. For some subset $S \subseteq \gZ$, we denote by $\gP(S)$ the probability mass of $S$ under $\gP$, i.e. $ \gP(S) := \Pr_{\vz \sim \gP}\left[\vz \in S\right]$. For some function $f : \gZ \to \gZ'$, we denote by $f(\gP)$ the probability distribution of $f(\vz)$ for $\vz \sim \gP$. For some set $B$, we denote by $\Delta(B)$ the set of probability distributions over $B$.

\begin{definition}
    For some finite set $\gZ$ and some distribution $\gP$ over $\gZ$, we define the \textbf{minimum support size} of mass $0 \le \alpha \le 1$ for $\gP$ to be the size of the smallest set that covers $\alpha$ probability mass: 
    $
    \mathrm{supp}_{\alpha}(\gP) := \min \{\abs{S} : S \subseteq \gZ, \gP(S) \ge \alpha\}
    $.
\end{definition}

\subsection{Long-Form Generation}

Let $\Sigma$ be a dictionary of tokens, and denote by $\Sigma^*$ the set of strings of tokens. Let $\gX_1, \gX_2, \dots \subseteq \Sigma^*$ be a sequence of input spaces, and let $\gY_1, \gY_2, \dots \subseteq \Sigma^*$ be a sequence of output spaces. We assume that the input and output spaces are finite, i.e. $\abs{\gX_n}, \abs{\gY_n} < \infty$ for all $n$. Let $\gD_1, \gD_2, \dots$ be a sequence of distributions, such that $\gD_n$ is a distribution over $\gX_n$. Finally, let $f : \Sigma^* \to \Sigma^*$ be some ground-truth function that satisfies, for all $n$, that $f(\gX_n) \subseteq \gY_n$. We think of the parameter $n$ as a complexity parameter, and so the distribution $\gD_n$ generates more complex inputs as $n \to \infty$.
We give the following definition of long-form generation tasks:

\begin{definition}
    We say that $f$, $\{\gD_n\}_{n=1}^\infty$ is a \textbf{long-form generation} task with coverage $\alpha \in (0,1)$ if $\supp_\alpha(f(\gD_n))$ is monotonically increasing with $n$,\footnote{The condition that the support size is monotonically increasing makes the theoretical results slightly easier to introduce, and holds for practically all reasonable long-form generation problems.} and  $\lim_{n \to \infty} \supp_\alpha(f(\gD_n)) = \infty$.
    
\end{definition}

Namely, we require that as the complexity $n$ increases, the effective number of possible outputs (i.e., outputs that have non-negligible probability mass) increases as well. We note that many natural long-form generation tasks indeed satisfy these conditions, for example:
    1) \textbf{Multi-Digit Addition} (or Multiplication): $\gD_n$ is a distribution over strings of the form $a+b=$ (or $a \times b =)$, where $a,b$ uniformly random numbers with $n$-digits. The function $f$ maps the input strings to the solution, e.g. $f(``a+b=")=c$ where $c = a+b$ (or $c=a\cdot b$).
    2) \textbf{Sorting}: $\gD_n$ is a distribution over $n$ items, $f$ maps to the sorted list of items.
    3) \textbf{Code Fixing}: $\gD_n$ is a distribution over python codes that have bugs that require changing $n$ lines of code, $f$ maps the code to the necessary changes.

\subsection{Generalized State Space Models}

In this section, we follow similar definitions and notations as in \cite{jelassi2024repeat}.
We define a state space to be some \emph{finite} set $\gS$ with $\abs{\gS} < \infty$.
A generalized state space model (GSSM) is a (potentially probabilistic) function $h : \Sigma^* \to \Delta(\Sigma^*)$ defined by an initial state $s_0 \in \gS$ and two rules: an update rule $u : \gS \times \Sigma \to \gS$, and an output rule $r : \gS \to \Delta(\Sigma)$.
Given some input $\vx \in \Sigma^*$, the function $h$ generates a sequence $\vy \in \Sigma^*$. We define the state and the output of $h$ at time $t$ recursively s.t. $s_t = u_h(s_{t-1}, x_{t-1})$ if $t \le \abs{\vx}$ and $s_t = u_h(s_{t-1}, y_{t-\abs{\vx}})$ if $t > \abs{\vx}$, and we sample $y_t \sim r(s_{\abs{\vx}+t})$.
We terminate when an end-of-sequence token $\mathrm{[EOS]} \in \Sigma$ is observed.

Note that any model that has fixed memory as a function of the sequence length satisfies the definition of a GSSM. This includes common choices of recurrent models, such as: LSTM \citep{hochreiter1997long}, Linear Transformers \citep{katharopoulos2020transformers}, H3 \citep{fu2022hungry}, RetNet \citep{sun2023retentive}, Mamba-1 and Mamba-2 \citep{gu2023mamba, dao2024transformers} and other variants \citep{yang2024gated}. Additionally, Transformers where all the attention layers have local (sliding window) attention with fixed window size, are also GSSMs. Other computational models that use Transformers to process fixed length sequences and update fixed-size ``memory'' vectors \citep{hutchins2022block} are also GSSMs. 
Transformers and hybrid-SSM models are \emph{not} GSSMs, since their memory increases with the sequence length.

\paragraph{CoT, Single-Turn and Interactive Tool-Use.} We analyze multiple settings where the model can invoke reasoning and tool-use. We follow the popular ReAct framework \citep{yao2023react} and let the model generate either \emph{thoughts}, that capture the internal reasoning of the model, or \emph{actions} that are followed by \emph{observations} from the environment. The thoughts and actions can be interleaved during the runtime of the model. We specify two types of actions: \emph{command actions}, that are sent to a tool-oracle $\gO$ that returns an \emph{observation} following the execution of the command, and \emph{output actions} that are simply tokens appended to an \emph{output stream} and do not result in an observation. The \emph{output stream} captures the final response of the model which is then evaluated against the ground-truth\footnote{We focus on agents for solving input-output problems, where the task of the model is to generate some output given the input problem (e.g. question answering, coding, mathematical proofs etc.). This is a different setting from an agent that performs actions and collects rewards, as in many Reinforcement Learning problems.}. Thoughts, commands and observations are placed between dedicated open/close tags (e.g. $\THINK, \sTHINK$). We define more formally the tool-oracle and the interaction protocol in Appendix \ref{app:defs}. We analyze three settings for problem-solving agents. 1) \textbf{CoT-only:} The model is allowed to use only \emph{thoughts} or \emph{outputs}, but cannot issue \emph{commands} or receive external observations\footnote{This setting also includes the case where the model generates the output immediately, without using CoT.}. 2) \textbf{Single-Turn Tool-Use:} The model is allowed to issue a single \emph{command}, followed by an observation, and then generate the output. The model can use \emph{thoughts} before and after the tool call, and during the output generation. 3) \textbf{Interactive Tool-Use:} The model is allowed to use as many \emph{commands} as it needs, and freely interleave thoughts, commands and outputs.

\subsection{Learning Algorithms and Length Generalization}

Fix some task $f, \{\gD_n\}_{n=1}^\infty$. We now define training data distributions for learning the task $f$. We note that for many downstream tasks, it is common to collect training data that contains CoT reasoning and/or tool-use traces for solving the problem. 
We therefore allow the training distributions to contain a task-specific reasoning and tool-use trajectories. 
Given some trajectory $\vz \in \Sigma^*$, we denote by $\vz^{(\mathrm{out})}$ the value of the output stream after execution of the trajectory.
Formally, a training distribution for the task $f, \{\gD_n\}_{n=1}^\infty$ is a sequence of distributions $\{\gP_n\}_{n=1}^\infty$ s.t. $\gP_n$ is a distribution over $\gX_n \times \Sigma^*$ satisfying that: 1) $\gD_n$ is the marginal distribution of $\gP_n$ w.r.t. $\gX_n$, and 2)
For $(\vx, \vz) \sim \gP_n$, with probability $1$ it holds that $\vz^{(\mathrm{out})} = f(\vx)$ (i.e. the output stream at the end of generation evaluates to the correct answer).

A learning algorithm $\gA$ is an algorithm that, for some given length $n$, draws a sample of size $m$ from $\gP_1, \dots, \gP_n$\footnote{We let the algorithm choose freely how to sample from these distributions.}, and returns some hypothesis $h : \Sigma^* \to \Delta(\Sigma^*)$ that, given an input problem, can generate a reasoning and tool-use trajectory. We denote the output of $\gA$ in this case by $\gA(\gP_1, \dots, \gP_n)$. We say that $\gA$ is a GSSM learning algorithm if it always returns a GSSM. 
We define the error of $h$ w.r.t $f$ for some complexity $n$ by $\err_n(h) = \Pr\left[h^{(\mathrm{out})}(\vx) \ne f(\vx)\right]$, with probability over $\vx \sim \gD_n$ and the randomness of $h$.
We now define \emph{length generalization} of an algorithm:

\begin{definition}
    We say that $\gA$ achieves \textbf{length generalization}, if for every $\epsilon, \delta \in (0,1)$ there exists some minimal complexity $n_0$ and sample size $m$ s.t. w.p. $\ge 1-\delta$ we have that $h_{n_0} = \gA(\gP_1, \dots, \gP_{n_0})$ satisfies $\err_{n}(h_{n_0}) \le \epsilon$ for all $n \ge n_0$.
\end{definition}

Namely, we require that the algorithm returns a hypothesis with low-error on problems with arbitrarily large complexity $n$, as long as it sees ``complex enough'' inputs sequence in the training data (with complexity larger than $n_0$). This requirement may seem relatively strong, as we could expect that the error of the learned model would grow with the complexity of the problem. However, we will show theoretically (and to some extent, empirically) that with a carefully constructed training data, achieving such ``infinite'' length generalization is possible.

\subsection{Main Results}
In this subsection, we state the main theoretical results in the paper. We begin by showing a negative result, stating that GSSMs \emph{cannot} solve long-form generation tasks, if they operate in the \emph{CoT-only} or \emph{single-turn tool-use} setting. Following this, we show a positive result, proving that for any computable long-form generation task we can construct training data such that a simple learning algorithm achieves length generalization on the target task in the \emph{interactive tool-use} setting.

\paragraph{GSSMs cannot Solve Long-Form Generation Tasks without Interaction.}

We begin by stating the negative result. The proof is relatively simple: since the model has a fixed memory, and outputs are a function of the state of the memory, the model cannot generate all outputs as complexity grows.

\begin{theorem}
\label{thm:lower_bound}
    Let $f$ be a long-form generation task over $\{\gD_n\}_{n=1}^\infty$ with coverage parameter $\alpha \in (0,1)$. Then, for any CoT-only or Single-Turn GSSM $h$ there exists some problem complexity $n_0$ s.t. for all $n\ge n_0$ the model $h$ has error: $\err_{n}(h) \ge 1-\alpha$. 
\end{theorem}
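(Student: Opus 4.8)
The plan is to exploit the single structural fact that, for a GSSM, the entire interaction history is compressed into a finite state in $\gS$ before the model emits its output, so the model can effectively commit to at most $\abs{\gS}$ distinct answer-distributions. Once the task forces more than $\abs{\gS}$ essentially-distinct outputs, the model must fail on most of them, and the coverage hypothesis $\supp_\alpha(f(\gD_n)) \to \infty$ supplies exactly this threshold.

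Concretely, I would first isolate the \emph{output-generating state}. Fix $h$ and an input $\vx$, and run the model together with its internal randomness $\omega$. In the CoT-only setting the model reads $\vx$, then produces thoughts and outputs; in the single-turn setting it additionally produces one command, receives one observation, and reads it. In both cases let $\Phi(\vx,\omega) \in \gS$ denote the state of the model at the moment it begins producing the output stream. The crucial observation --- and the place where the single-turn case needs care --- is that reading the observation only applies the update rule $u$ and therefore keeps the state inside the finite set $\gS$: the (possibly long) command and observation are absorbed into a single state, so $\Phi$ takes at most $\abs{\gS}$ values. By the recursive definition of a GSSM, the distribution of the output stream conditioned on $\Phi = s$ is some fixed $g(s) \in \Delta(\Sigma^*)$ depending only on $s$; in particular, conditioned on $\Phi=s$ the output is independent of $\vx$.

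Next I would turn this into an accuracy bound. Writing $\tilde q_s(y) := \Pr[f(\vx)=y \mid \Phi=s]$ and using the conditional independence,
\[
\Pr[h^{(\mathrm{out})}(\vx)=f(\vx)] = \sum_{s\in\gS}\Pr[\Phi=s]\sum_{y}g(s)(y)\,\tilde q_s(y) \le \sum_{s\in\gS}\Pr[\Phi=s]\,\max_y \tilde q_s(y),
\]
where the inequality uses $\sum_y g(s)(y)\le 1$. Letting $y_s := \argmax_y \tilde q_s(y)$, each summand equals $\Pr[\Phi=s,\,f(\vx)=y_s]$, and since the events $\{\Phi=s\}$ are disjoint these probabilities sum to $\Pr[\exists s:\Phi=s,\,f(\vx)=y_s]\le \Pr[f(\vx)\in Y^\ast]$, where $Y^\ast:=\{y_s : s\in\gS\}$ satisfies $\abs{Y^\ast}\le\abs{\gS}$. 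Hence $\Pr[h^{(\mathrm{out})}(\vx)=f(\vx)] \le f(\gD_n)(Y^\ast)$, the mass placed by $f(\gD_n)$ on a set of at most $\abs{\gS}$ outputs.

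Finally I would invoke the long-form hypothesis. By definition $\supp_\alpha(f(\gD_n))$ is monotone increasing with limit $\infty$, so there is an $n_0$ with $\supp_\alpha(f(\gD_{n_0})) > \abs{\gS}$, and by monotonicity $\supp_\alpha(f(\gD_n)) > \abs{\gS}$ for all $n \ge n_0$. For such $n$, no set of size $\le \abs{\gS}$ can carry $\alpha$ mass, so $f(\gD_n)(Y^\ast) < \alpha$ and therefore $\err_n(h) = 1 - \Pr[h^{(\mathrm{out})}(\vx)=f(\vx)] \ge 1-\alpha$, as claimed. The main obstacle is conceptual rather than computational: making precise that in the single-turn setting the command the model issues and the observation it receives, however informative, cannot increase the number of distinct output-generating states beyond $\abs{\gS}$. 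This is exactly what separates single-turn from the interactive setting, and it is what lets the clean counting argument go through.
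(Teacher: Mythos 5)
Your proposal is correct and follows essentially the same route as the paper's proof: you condition on the finite-valued state at the moment output generation begins (the paper's $U(\vx)$, your $\Phi(\vx,\omega)$), use conditional independence of the output stream from $f(\vx)$ given that state, bound the success probability by the mass of the set of per-state argmax outputs $\{\vy_s\}_{s\in\gS}$, and invoke monotonicity of $\supp_\alpha(f(\gD_n))$ to get the threshold $n_0$. The observation you flag as the key subtlety --- that in the single-turn setting the command and observation are absorbed into a single state in $\gS$ before any output token is emitted --- is exactly how the paper handles that case as well.
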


The full proof is given in Appendix \ref{app:proofs}.
An immediate implication of this result is that GSSM learning algorithms cannot achieve length generalization on long-form generation tasks without interaction. 

\paragraph{GSSMs with Interactive Tool-Use can Length Generalize on Long-Form Generation Tasks.}

For some function $f : \Sigma^* \to \Sigma^*$, we say that $f$ is \emph{computationally tractable} if there exists a Turing machine $\gT$ s.t. for any $\vx \in \Sigma^*$, if $\gT$ begins with $\vx$ written on its tape, it halts with $f(\vx)$ written on its tape. The following result shows that a GSSM learning algorithm can achieve length generalization with interactive tool-use, given proper training data:

\begin{theorem}
\label{thm:length_generalization}
    There exists memory-tool oracle $\gO$ and a simple GSSM learning algorithm $\gA$\footnote{The algorithm that we analyze is ``simple'' in the sense that it learns a function that operates using simple string-matching with the training data, similar to e.g. n-gram models. While this is not a ``standard'' learning algorithm, we believe that similar results can be obtained for more natural algorithms (e.g., gradient descent on a simple RNN), at the cost of making the analysis much more involved.} s.t. for any computationally tractable long-form generation task $f, \{\gD_n\}_{n=1}^\infty$, there exists a sequence of training distributions $\{\gP_n\}_{n=1}^\infty$ for which $\gA$ achieves length generalization in the interactive setting.
\end{theorem}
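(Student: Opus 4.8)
The plan is to exploit the fact that a GSSM with interactive access to an unbounded external memory can faithfully simulate the Turing machine $\gT$ that computes $f$, even though its own state space $\gS$ is finite. The finite state of the GSSM only needs to track the finite control state of $\gT$ together with a bounded window of recently generated tokens, while the tool plays the role of the unbounded tape. Concretely, I would take $\gO$ to be a \emph{pointer-based memory} oracle supporting a fixed, finite instruction set: write a symbol at the current pointer, move the pointer one cell left or right, and read the symbol under the pointer (returned as an observation). Since $f$ is computationally tractable, fix a Turing machine $\gT$ computing it, with finite tape alphabet and finite control-state set.

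Next I would define, for each input $\vx \sim \gD_n$, a single canonical trajectory $\vz$ that transcribes the run of $\gT$ on $\vx$ step by step. One step of $\gT$ is encoded as a read command, the returned observation, a thought recording the new control state, a write command, and a move command; at the end the tape contents are copied into the output stream so that $\vz^{(\mathrm{out})} = f(\vx)$, as required for a valid training distribution. The essential design constraint is \emph{locality}: the trajectory must be written so that the correct next token is a function only of a bounded-length suffix $w$ of the tokens generated so far (the current control state, the last observation, and a constant number of formatting tokens), and crucially this suffix never grows with $n$ because the oracle uses \emph{relative} pointer moves rather than absolute positions. Because $\gT$ is deterministic, each admissible window $w$ has a unique correct successor token, and because the tape alphabet, control states, and instruction set are finite, the set $W$ of admissible windows is finite and independent of $n$.

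The learning algorithm $\gA$ is then the natural string-matching (lookup-table) rule: from the sampled trajectories it records every pair (window $w$, next token), and it returns the GSSM whose state stores the current window $w$, whose update rule shifts the new token into $w$, and whose output/command rule emits the recorded successor of $w$ (emitting $\EOS$ or an arbitrary token on any unseen window). This is realizable by a GSSM precisely because $W$ is finite, so the required state space is finite. It remains to argue coverage: for $n_0$ and sample size $m$ large enough, every window in $W$ that can arise at any test complexity $n \ge n_0$ already appears in the training sample with probability $\ge 1-\delta$, which is a coupon-collector argument over the finite set $W$. Once all windows are covered, the learned GSSM reproduces the run of $\gT$ token-for-token on \emph{every} input regardless of length, so on this $1-\delta$ event we get $\err_n(h_{n_0}) = 0 \le \epsilon$ for all $n \ge n_0$, which is exactly length generalization.

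The main obstacle is the locality/coverage interplay in the construction of $\{\gP_n\}$. I expect the delicate part to be proving that a single finite window set $W$ simultaneously (i) suffices to determine every transition of $\gT$ on inputs of arbitrary length, and (ii) is fully exercised by training data whose marginals are the fixed distributions $\gD_1, \dots, \gD_{n_0}$. Point (i) forces the trajectory format to encode the head position only implicitly, through the oracle's internal pointer, and never to leak an unbounded counter into the generated tokens; any such leak would make the window set grow with $n$ and break generalization. Point (ii) requires that the finitely many transition rules of $\gT$ all surface on the support of $\gD_{\le n_0}$ before the coupon-collector bound applies; I would guarantee this by prepending to each trajectory a short, fixed, input-independent preamble that deterministically exercises every transition rule (using only thoughts and commands, then resetting the tape, so that $\vz^{(\mathrm{out})}$ is unaffected), which forces all windows in $W$ to appear in every trajectory and makes the coverage argument immediate.
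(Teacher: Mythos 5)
Your construction itself --- the pointer/tape oracle, the trajectories that transcribe the run of $\gT$ step by step, and the string-matching learner whose lookup table of (state, observed symbol) windows is realized as a finite-state GSSM --- is essentially identical to the paper's proof. The genuine gap is in your coverage argument. You claim that ``every window in $W$ that can arise at any test complexity $n \ge n_0$ already appears in the training sample with probability $\ge 1-\delta$, which is a coupon-collector argument over the finite set $W$.'' This is false in general: a transition $(q,\sigma)$ of $\gT$ may be exercised only by inputs of complexity larger than any fixed $n_0$, or may occur with probability vanishing in $n$, so no amount of sampling from $\gD_1,\dots,\gD_{n_0}$ covers it with high probability. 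The paper does not attempt full coverage. Instead it proves a lemma: writing $p_n(q,\sigma)$ for the probability that the run on $\vx\sim\gD_n$ uses the pair $(q,\sigma)$, and $M=\abs{Q}\cdot\abs{\Sigma}$, it defines $A_\epsilon = \{(q,\sigma) : \sup_n p_n(q,\sigma) \ge 2\epsilon/M\}$, picks $n_0$ large enough that every pair in $A_\epsilon$ has $p_n(q,\sigma)\ge \epsilon/M$ for some $n\le n_0$, samples roughly $M\log(M/\delta)/\epsilon$ trajectories per complexity level to cover $A_\epsilon$ with probability $1-\delta$, and then union-bounds: every pair \emph{outside} $A_\epsilon$ is rare at \emph{every} complexity, so the probability that a test input at any $n\ge n_0$ needs an uncovered pair is at most $\epsilon$. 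This is precisely why the theorem promises error $\le\epsilon$ rather than the error $0$ you assert on the good event; accepting incomplete coverage and bounding the mass of what is missed, uniformly over $n$, is the missing idea.

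Your proposed preamble does not repair the gap, because it collides with the locality requirement you yourself identify. To exercise a transition $(q,\sigma)$ artificially, the preamble must \texttt{write} $\sigma$ before reading it, so in the preamble the token following the state-thought $\enc(q')$ is a setup command, whereas in a genuine run the token following $\enc(q')$ is \texttt{read}. You are then forced into one of two failure modes: if the matching windows are short enough that preamble windows coincide with genuine-run windows, the same window has two distinct correct successors and the lookup rule is ill-defined (it will emit setup commands in the middle of real runs); if the windows are long enough to disambiguate preamble from run, then covering preamble windows implies nothing about run windows, and the preamble buys you nothing. In addition, ``resetting the tape'' is problematic: the oracle's memory is initialized to $\vx$, so the preamble's writes destroy input symbols that must be restored, making the preamble input-dependent rather than fixed. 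The probabilistic argument via $A_\epsilon$ avoids all of this and is, as far as I can see, necessary for the theorem as stated.
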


To show the above result, we define a simple tool that allows read/write access to some external memory, using a pointer that can move left or right between the memory cells. Using this tool, we can simulate the operations of a Turing machine, where we use the external memory as the tape of the Turing machine, use \emph{thoughts} to track the state of the machine and \emph{commands} to move the head and read/write symbols. Since the transition function of the Turing machine is defined for every pair of state and symbol, to prove that length generalization is achieved we show that, for large enough $n_0$, most of these pairs are seen in the training data.
We give the complete proof in Appendix \ref{app:proofs}.

To conclude, the above results show that \emph{interactive} tool-use is both \emph{necessary} and \emph{sufficient} for GSSMs to achieve length generalization on tractable long-form generation problems.

\section{Experiments}

\label{sec:experiments}

In this section we evaluate the length generalization capabilities of GSSMs and Transformer-based language models on various tasks, including arithmetic, reasoning and coding. We experiment with different choices of tools that allow read/write memory access, using either a pointer-based memory access, search tool, or arbitrary \emph{bash} commands for reading and changing files. We use both tasks where we synthetically generate the ground-truth trajectory and tool commands, as well as a coding task where we collect the trajectories from a SWE coding agent. In our experiments, we largely follow a similar framework for ReAct agents defined in the previous section, where the model can interleave \emph{thoughts}, \emph{outputs} (``final answer'' tokens), and \emph{commands} that are followed by \emph{observations} from the environment.
In our experiments we use Mamba SSM \citep{gu2023mamba}, LSTM \citep{hochreiter1997long}, GRU \citep{cho2014properties}, Pythia Transformer \citep{biderman2023pythia} and a Transformer with sliding window (local) attention based on the Mistral architecture \citep{jiang2023mistral7b}.
In all experiments, we see that SSMs/RNNs achieve length generalization performance that is much better compared to Transformers. See Appendix \ref{app:experiments_details} for experimental details.

\begin{table}[t]
\caption{Experimental results for synthetic tasks for different models. The notation $n\rightarrow m (p\%)$ means a model trained on length $n$ achieves accuracy $p$ on length $m$ (for the largest $m$ s.t. $p \ge 5\%$).}
\label{tab:results}
\begin{center}
\begin{tabular}{lllll}
Model & $n\times 1$ & $n \times 2$ & Logical Graph & Hanoi\tablefootnote{Due to the sensitivity of the Hanoi experiments to the initialization seed, we tried 10 seeds for Mamba and Pythia and 3 seeds for the rest of the models. Best performing seed is reported. See Appendix \ref{app:hanoi_details} for more details.}\\
\hline
Mamba & \textbf{10$\to$1K (100\%)}  & \textbf{10$\to$1K (100\%)} & 10$\to$1K (98\%)& \textbf{8$\to$12 (49\%)}\\
LSTM  & 10$\to$500 (100\%)  & 10$\to$100 (100\%)& \textbf{10$\to$1K (100\%)} & 8$\to$8 (100\%)\\
GRU & 10$\to$500 (100\%)  & 10$\to$100 (100\%)& \textbf{10$\to$1K (100\%)} & 8$\to$8 (100\%)\\
\hline
Pythia  & 10$\to$20 (79\%)  & 10$\to$14 (12\%)& 10$\to$1K (5\%)& 8$\to$8 (100\%)\\
Mistral & 10$\to$13 (25\%)  & 10$\to$20 (33\%)& 10$\to$500 (9\%)& 8$\to$8 (100\%)\\
\end{tabular}
\end{center}
\vspace{-0.1in}
\end{table}

\subsection{Arithmetic Tasks}

In the following set of experiments, we augment the model with a pointer-based memory tool that gives the model access to past tokens in the input/output context. In this setting, the model can execute the following commands: 1) initialize a new pointer, 2) move the pointer left or right by a single token and 3) read the token under a given pointer. By default, a new pointer is initialized to the first token position of the input context. The \emph{thoughts} and \emph{outputs} are appended to the context, and are therefore accessible by the pointers (if they reach beyond the length of the input), but \emph{commands} and \emph{observations} are discarded (i.e., they are not appended to the context memory and cannot be read by the pointers). We give a detailed description of how \emph{thoughts}, \emph{commands} and \emph{outputs} are specified in Appendix \ref{app:memory_tool}. The final answer is written in the \emph{output stream} at the end of the generation.

We train the model using the standard next-token prediction objective with teacher-forcing, while masking from the loss the input question and the \emph{observations} (the outputs of a \emph{read} operation, which will be generated by the memory tool). For the training data, we construct synthetic trajectories that simulate the desired algorithm, and train the model to exactly execute the algorithm required for solving the problem using the memory-tool interaction. 

\paragraph{Multi-Digit Addition.}

For this task, we train the model to perform multi-digit addition. We fix some maximal training length $n$, and for each training example we sample uniformly $n_1, n_2 \sim \{1, \dots, n\}$, then sample two numbers $x_1, x_2$ where $x_i$ is a uniformly random $n_i$-digit number. We construct a training example with the trajectory for solving $x_1+x_2$, essentially mimicking the long addition algorithm (see Appendix \ref{app:cot_algorithms} for details). For evaluation, we choose $n' \ge n$ and evaluate on addition of two $n'$-digit numbers. In evaluation, we measure the accuracy of exact-recovery of the trajectory \emph{and} the final answer (i.e., we measure the probability of generating a solution that exactly matches the desired algorithm).
Figure \ref{fig:addition_task} (right) shows the results of this experiment. We observe that Mamba and LSTM trained on $5$-digit demonstrations learn to perfectly perform 1,000-digit addition (we did not measure the accuracy beyond this). A Transformer trained in the same setting fails to extrapolate. Additional ablations, such as training with no CoT, no tool-use and single-turn tool-use, result in little to no length generalization, and are discussed in Appendix \ref{app:ablations}.

\paragraph{Multi-Digit Multiplication.}
For this task we use the same pointer-based memory tool described above for learning the multiplication algorithm. In this task, we increase the length of only the first operand, keeping the second operand fixed. Specifically, we fix some maximal training length $n$, choose $n_1 \sim \{1, \dots, n\}$ to be the length of the first operand and choose $n_2 \sim \{1,2\}$ to be the length of the second operand (i.e. we multiply an $n_1$-digit number by a 1-digit or 2-digit number). We sample $x_1, x_2$ where $x_i$ is a uniformly random $n_i$-digit number, and construct the trajectory for solving $x_1 \times x_2$ (see details in Appendix \ref{app:cot_algorithms}). We then evaluate on $n' \times 1$ and $n' \times 2$ digit multiplication, for some $n' \ge n$, and report exact recovery accuracy. We train different SSMs/RNNs and Transformers where first operand has $n \le 10$ digits, and evaluate on multiplications of numbers with up to 1,000-digits (Table \ref{tab:results}). Here too we see that Mamba models maintain high accuracy when evaluated on numbers that have orders of magnitude more digits than in the training data (also see Appendix \ref{app:mult-ablation} for ablations on training steps and maximum number of digits seen during training).

\paragraph{Task Mixture.}\label{task-mixture}
We examine whether co-training a primary task with an auxiliary task that shares a related computational structure yields synergistic benefits \citep{awasthi2023taskhinting}. Our experiments indicate that such co-training improves the length generalization of the primary task under limited training budgets.
In our experiments, the primary task is multiplication ($n$-digit × 2-digit), co-trained with addition ($n+n$ digits) as an auxiliary task. Both tasks share structural similarities when expressed as sequences of tool calls. The training distribution for multiplication contains samples up to 20 digits.
We compare the accuracy as a function of test length for various training budgets (250, 500 or 800 steps) and various choices of task mixtures  (see Appendix \ref{app:task-mixture}).
We observe that under limited budgets (250 steps), introducing auxiliary addition samples yields minor improvements. At intermediate budgets (500 steps), the benefit becomes more pronounced, with certain weights extending generalization to much larger $n$. However, with sufficient training (800 steps), all settings converge to strong generalization, and the auxiliary data provides no additional gain.

\begin{figure}
    \centering
    \includegraphics[width=1.0\linewidth]{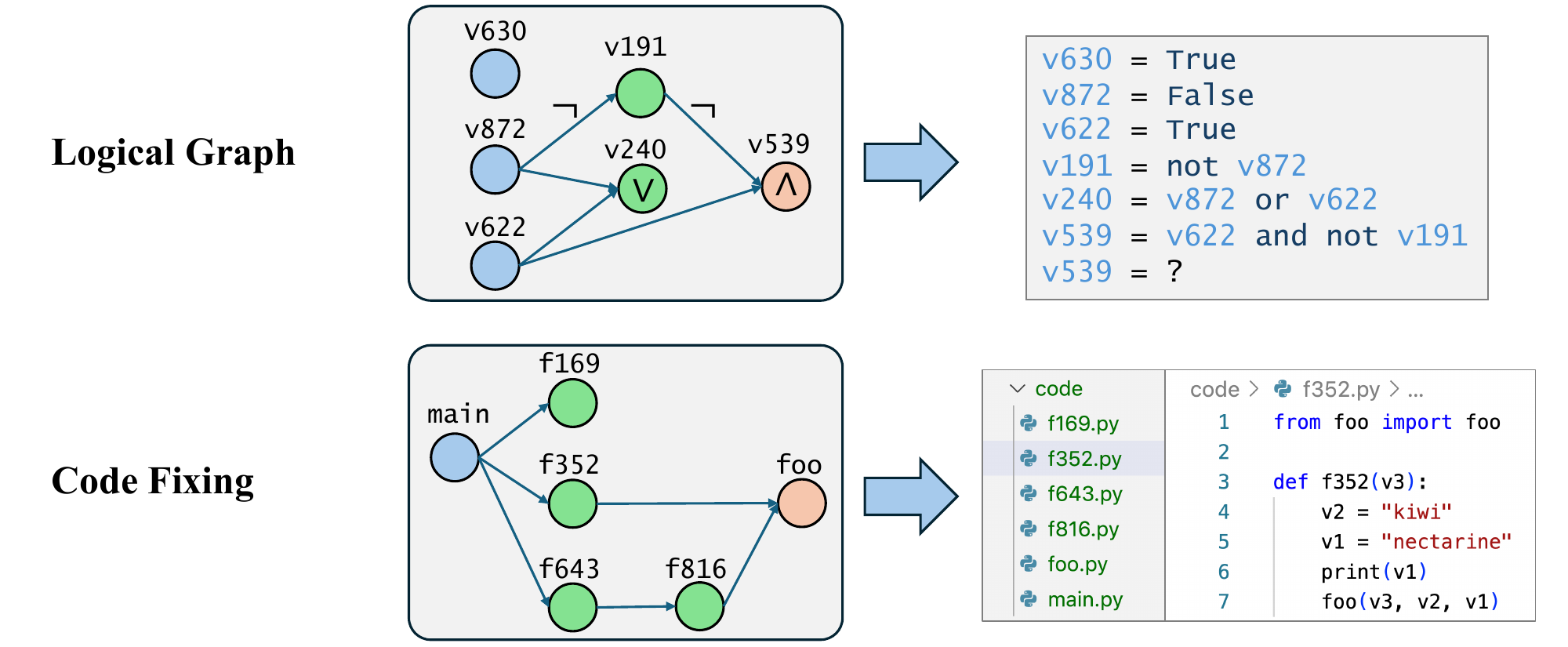}
    \caption{Illustration of Logical Graph reasoning task and code fixing task. We generate random graphs that define logical function structure (top) or code dependencies (bottom), and synthetically generate problems according to these graph structures.}
    \label{fig:tasks}
\end{figure}

\subsection{Algorithmic/Reasoning Tasks}

We next turn to evaluate the tool-use paradigm on tasks that test certain ``reasoning'' capabilities.

\paragraph{Tower of Hanoi.} This task is based on the popular puzzle game, which was also recently used for testing reasoning capabilities of frontier LLMs, showing that they struggle to solve this task as complexity increases \citep{shojaee2025illusion}. In our setup, we randomly sample (without replacement) $n$ disks of sizes $\in \{1, \dots, 100\}$. These disks are placed on the first rod (labeled $A$), ordered from the largest to the smallest, with rods $B$ and $C$ being empty. The input to the model is the list of disks, which captures the initial state of the game. The model then needs to output a sequence of valid moves that result in placing the pile on rod $C$. We use the same pointer-based memory tool as in the previous experiments, and train the model trajectories with up to $n$ disks, evaluating on larger $n'$ (see Appendix \ref{app:cot_algorithms}). In this experiment we observe more limited length generalization (Table \ref{tab:results}), but note that unlike other experiments, here the length of the output increases \emph{exponentially} with $n$.

\paragraph{Logical Graph.} In this task, we construct a directed-acyclic computation graph with $n$ nodes. The graph has $k$ input nodes (for some fixed $k$), and each internal node node computes a Boolean operation (AND/OR) on one or two input variables or their negations. We construct the graph by iteratively adding new internal nodes and randomly choosing their Boolean operation and their connectivity to existing nodes in the graph. We take the last node that is added to be the output node. All nodes are randomly labeled, and the model receives the graph structure and an assignment for the input variables as a python code (see Figure \ref{fig:tasks}). In this task, instead of using the pointer-based memory tool as in previous tasks, we use a \emph{search} tool: the model can issue a command $\texttt{find}(x)$, and gets a list of all occurrences of the pattern $x$ in the context. As before, all \emph{thoughts} and \emph{outputs} generated by the model are appended to the context and are therefore searchable in future iterations. We fix $k=3$ and train the model on trajectories for solving this problem for graphs with up to $n = 10$ nodes. We then evaluate on graphs with $n' \ge n$ nodes, and report the exact-match accuracy in Table \ref{tab:results}. Again, we see that Mamba and recurrent models can solve this problem, extrapolating to graphs with $n = 1,000$ nodes.

\subsection{Coding Task}
For the previous tasks, we trained models ``from scratch'' on synthetic trajectories that invoke tool use for solving arithmetic and algorithmic problems. This allowed us to demonstrate the length generalization capability of SSMs equipped with tool-use in a clean and controlled setting, resulting in perfect recovery of the underlying algorithm in many cases. We now turn to study extrapolation of tool-use agents in a more realistic coding setting. Importantly, this setting will allow us to go beyond programmatically generated trajectories and collect trajectories from an existing SWE coding agent. This demonstrates that our results and observations can also be applicable in settings where the underlying algorithm/method for solving the task are not known or well-specified.

Our task will be fixing a ``bug'' in a given codebase. To construct the codebase we generate $n$ python functions, each function saved in a separate python file. The functions form a dependency graph, with one root function called $\texttt{main}$ (stored in $\texttt{main.py}$). Each function declares variables (named $\texttt{v0}, \texttt{v1}, \dots, \texttt{v9}$), gets some variables as inputs and passes some variables to other functions it imports. We generate this codebase by randomly generating a dependency graph, iteratively adding nodes (functions) to this graph and connecting each node to existing nodes, where each edge represents a parent function importing a child function. Function names are randomly selected from $\texttt{f0}, \dots, \texttt{f999}$, except for the last function added to the graph which is called $\texttt{foo}$. We then randomly assign variables and print them and/or pass them from parent functions to child functions. The code always has the following ``bug'': there is a special variable $\texttt{v10}$ that is declared in $\texttt{main.py}$ and is called in $\texttt{foo.py}$ without properly passing it from $\texttt{main}$. In order to fix the code, we need to pass the variable $\texttt{v10}$ through all the dependency paths from $\texttt{main}$ to $\texttt{foo}$ (ideally without changing other functions, though we do not enforce this). See Figure \ref{fig:tasks} for an illustration of how we generate the code.

We start by running a coding agent and collecting its trajectories when attempting to solve this code-fixing task, as we are varying the number of functions $n$ in the codebase (choosing $n \in \{4, \dots, 16\}$). We use three types of agents for generating trajectories, (illustrated in Figure \ref{fig:coding}): 1) \textbf{Single-Turn Agent}: Hard-coded agent that prints all the files and immediately generates the correct code edits. 2) \textbf{Interactive Agent}: Hard-coded agent that iteratively runs the code, resolves the issue in up to 3 files, then runs the code again, and keeps going until the code runs without errors. 3) \textbf{Distillation}: An agent based on $\texttt{SWE-agent-LM-32B}$ \citep{yang2025swe}, a capable open-source coding model that we couple with mini-SWE-agent\footnote{\url{https://github.com/SWE-agent/mini-swe-agent}} \citep{yang2024sweagent} as a simple agent environment which gives the model access to the code through bash commands. We instruct the model to fix the bug in the code, specifically telling it what the bug is and how it should fix it (pass the variable $\texttt{v10}$ from $\texttt{main}$ to $\texttt{foo}$). See the full prompt and further details in Appendix \ref{app:code_fixing}. We observe that while this task is relatively simple, the model's performance degrades as the complexity (number of functions) in the codebase increases (see statistics in Appendix \ref{app:code_fixing}). We therefore filter the trajectories to include only trajectories that correctly fixed the code, and also filter for short trajectories (shorter than the average length for a given size $n$).

After collecting around 100K trajectories from each coding agent, we finetune two comparable models on these trajectories: Pythia-1.4B (Transformer-based model, \cite{biderman2023pythia}) and Mamba-1.4B  \citep{gu2023mamba}, both pretrained on The Pile \citep{pile}. We train both models with context length 8,192, on codebases of up to 16 functions (if the trajectory is longer than the context length, we train only on the first 8,192 tokens). We then evaluate both models on codebases of different sizes, letting the models generate beyond the context length.\footnote{We experimented with applying RoPE scaling when using the Transformer beyond the training context length, both in finetuning and evaluation, and observed mixed results. We report the accuracy for the best choice (with or without RoPE scaling) in each setting.} We measure the probability of correctly fixing the code (using the same environment used for collecting the trajectories). As shown in Figure \ref{fig:coding}, we observe that for codebases with small number of functions, both Transformer and Mamba models perform well in all settings. Notably, the Transformer-based model outperforms the Mamba SSM for small codebases in the agent distillation setting, achieving over 90\% pass rate. However, for larger codebases, beyond the training distribution (both in terms of number of functions and trajectory length), we see that the Mamba model maintains much better accuracy as the complexity increases when trained to imitate interactive agents (agents 2 and 3), but fails on complex codebases when trained in the single-turn setting (agent 1).
This finding aligns with our theoretical results, and also matches the previous synthetic experiments.

\section{Conclusion and Discussion}

We started this work by comparing two families of models for long-form generation: Transformers and SSMs. Transformers are inefficient for long-context and long-form generation, as their computational complexity scales quadratically with the sequence length. SSMs, on the other hand, offer linear scaling of compute but, as we showed, cannot accurately solve long-form generation tasks (without tools). This demonstrates a clear trade-off between efficiency and accuracy that seems to be inescapable. Indeed, several works have observed that SSMs are inferior to Transformers in various tasks that require memorization of long sequences (\cite{jelassi2024repeat, waleffe2024empirical}).

On the positive side, we show that in the agentic/tool-use setting, SSMs can leverage tools to overcome their memory bottleneck, thus offering efficiency, accuracy, and generalization to longer sequences. In hindsight, SSMs seem to be a natural fit for tool-use settings: tools often generate large quantities of content, which SSMs can parse efficiently, and also involve multi-turn interactions that can quickly overflow the context of a standard Transformer. However, it seems that there is little work on building SSM-based agents, and thus their evaluation is restricted to the ``standalone'' setting, where they are inherently limited. We do not believe this is due to any inability of SSMs to learn tool-use behavior. For example, while Mistral's Mamba-Codestral-7B-v0.1 model does not naively support function calling, it is able to achieve comparable tool-use performance to several function-calling-enabled transformer-based models (Appendix \ref{app:pretrained}). We therefore believe this work should encourage the development of a tool-based SSMs that operate in various agentic settings, such as coding, search or reasoning. This application could potentially unlock the full capabilities of these models, making them competitive with, or even superior to, Transformer-based agents.

Finally, we want to emphasize that our work is one of the first to analyze the performance of language modeling architectures \emph{in a system}, rather than as standalone models. To some extent, our analysis shows that certain architectures can be ``weaker'' when operating standalone, but in fact perform better when incorporated as part of a system. Since LLMs are now rarely used as standalone tools, we believe that this aspect of language modeling deserves more attention and focus in the field.

\bibliography{references}

\begin{thebibliography}{67}
\providecommand{\natexlab}[1]{#1}
\providecommand{\url}[1]{\texttt{#1}}
\expandafter\ifx\csname urlstyle\endcsname\relax
  \providecommand{\doi}[1]{doi: #1}\else
  \providecommand{\doi}{doi: \begingroup \urlstyle{rm}\Url}\fi

\bibitem[Abbe \& Sandon(2023)Abbe and Sandon]{abbe2020poly}
Emmanuel Abbe and Colin Sandon.
\newblock Polynomial-time universality and limitations of deep learning.
\newblock \emph{Communications on Pure and Applied Mathematics}, 76\penalty0 (11):\penalty0 3493--3549, 2023.
\newblock \doi{https://doi.org/10.1002/cpa.22121}.
\newblock URL \url{https://onlinelibrary.wiley.com/doi/abs/10.1002/cpa.22121}.

\bibitem[Abbe et~al.(2021)Abbe, Kamath, Malach, Sandon, and Srebro]{abbe2021power}
Emmanuel Abbe, Pritish Kamath, Eran Malach, Colin Sandon, and Nathan Srebro.
\newblock On the power of differentiable learning versus {PAC} and {SQ} learning.
\newblock In \emph{Advances in Neural Information Processing Systems}, volume~34, 2021.

\bibitem[Abbe et~al.(2024)Abbe, Bengio, Lotfi, Sandon, and Saremi]{abbe2024far}
Emmanuel Abbe, Samy Bengio, Aryo Lotfi, Colin Sandon, and Omid Saremi.
\newblock How far can transformers reason? the globality barrier and inductive scratchpad.
\newblock \emph{Advances in Neural Information Processing Systems}, 37:\penalty0 27850--27895, 2024.

\bibitem[Aky{\"u}rek et~al.(2024)Aky{\"u}rek, Wang, Kim, and Andreas]{akyurek2024context}
Ekin Aky{\"u}rek, Bailin Wang, Yoon Kim, and Jacob Andreas.
\newblock In-context language learning: Architectures and algorithms.
\newblock \emph{arXiv preprint arXiv:2401.12973}, 2024.

\bibitem[Awasthi \& Gupta(2023)Awasthi and Gupta]{awasthi2023taskhinting}
Pranjal Awasthi and Anupam Gupta.
\newblock Improving length-generalization in transformers via task hinting.
\newblock \emph{arXiv preprint arXiv:2310.00726}, 2023.
\newblock URL \url{https://arxiv.org/abs/2310.00726}.

\bibitem[Ben-Kish et~al.(2024)Ben-Kish, Zimerman, Abu-Hussein, Cohen, Globerson, Wolf, and Giryes]{ben2024decimamba}
Assaf Ben-Kish, Itamar Zimerman, Shady Abu-Hussein, Nadav Cohen, Amir Globerson, Lior Wolf, and Raja Giryes.
\newblock Decimamba: Exploring the length extrapolation potential of mamba.
\newblock \emph{arXiv preprint arXiv:2406.14528}, 2024.

\bibitem[Bhattamishra et~al.(2022)Bhattamishra, Patel, Kanade, and Blunsom]{bhattamishra2022simplicity}
Satwik Bhattamishra, Arkil Patel, Varun Kanade, and Phil Blunsom.
\newblock Simplicity bias in transformers and their ability to learn sparse boolean functions.
\newblock \emph{arXiv preprint arXiv:2211.12316}, 2022.

\bibitem[Biderman et~al.(2023)Biderman, Schoelkopf, Anthony, Bradley, O’Brien, Hallahan, Khan, Purohit, Prashanth, Raff, et~al.]{biderman2023pythia}
Stella Biderman, Hailey Schoelkopf, Quentin~Gregory Anthony, Herbie Bradley, Kyle O’Brien, Eric Hallahan, Mohammad~Aflah Khan, Shivanshu Purohit, USVSN~Sai Prashanth, Edward Raff, et~al.
\newblock Pythia: A suite for analyzing large language models across training and scaling.
\newblock In \emph{International Conference on Machine Learning}, pp.\  2397--2430. PMLR, 2023.

\bibitem[Blakeman et~al.(2025)Blakeman, Basant, Khattar, Renduchintala, Bercovich, Ficek, Bjorlin, Taghibakhshi, Deshmukh, Mahabaleshwarkar, et~al.]{blakeman2025nemotron}
Aaron Blakeman, Aarti Basant, Abhinav Khattar, Adithya Renduchintala, Akhiad Bercovich, Aleksander Ficek, Alexis Bjorlin, Ali Taghibakhshi, Amala~Sanjay Deshmukh, Ameya~Sunil Mahabaleshwarkar, et~al.
\newblock Nemotron-h: A family of accurate and efficient hybrid mamba-transformer models.
\newblock \emph{arXiv preprint arXiv:2504.03624}, 2025.

\bibitem[Cho et~al.(2024)Cho, Cha, Awasthi, Bhojanapalli, Gupta, and Yun]{cho2024position}
Hanseul Cho, Jaeyoung Cha, Pranjal Awasthi, Srinadh Bhojanapalli, Anupam Gupta, and Chulhee Yun.
\newblock Position coupling: Improving length generalization of arithmetic transformers using task structure.
\newblock \emph{Advances in Neural Information Processing Systems}, 37:\penalty0 22233--22315, 2024.

\bibitem[Cho et~al.(2014)Cho, Van~Merri{\"e}nboer, Bahdanau, and Bengio]{cho2014properties}
Kyunghyun Cho, Bart Van~Merri{\"e}nboer, Dzmitry Bahdanau, and Yoshua Bengio.
\newblock On the properties of neural machine translation: Encoder-decoder approaches.
\newblock \emph{arXiv preprint arXiv:1409.1259}, 2014.

\bibitem[Choromanski et~al.(2020)Choromanski, Likhosherstov, Dohan, Song, Gane, Sarlos, Hawkins, Davis, Mohiuddin, Kaiser, et~al.]{choromanski2020rethinking}
Krzysztof Choromanski, Valerii Likhosherstov, David Dohan, Xingyou Song, Andreea Gane, Tamas Sarlos, Peter Hawkins, Jared Davis, Afroz Mohiuddin, Lukasz Kaiser, et~al.
\newblock Rethinking attention with performers.
\newblock \emph{arXiv preprint arXiv:2009.14794}, 2020.

\bibitem[Cobbe et~al.(2021)Cobbe, Kosaraju, Bavarian, Chen, Jun, Kaiser, Plappert, Tworek, Hilton, Nakano, et~al.]{cobbe2021training}
Karl Cobbe, Vineet Kosaraju, Mohammad Bavarian, Mark Chen, Heewoo Jun, Lukasz Kaiser, Matthias Plappert, Jerry Tworek, Jacob Hilton, Reiichiro Nakano, et~al.
\newblock Training verifiers to solve math word problems.
\newblock \emph{arXiv preprint arXiv:2110.14168}, 2021.

\bibitem[Dao \& Gu(2024)Dao and Gu]{dao2024transformers}
Tri Dao and Albert Gu.
\newblock Transformers are ssms: Generalized models and efficient algorithms through structured state space duality.
\newblock \emph{arXiv preprint arXiv:2405.21060}, 2024.

\bibitem[Dao et~al.(2022)Dao, Fu, Ermon, Rudra, and R{\'e}]{dao2022flashattention}
Tri Dao, Dan Fu, Stefano Ermon, Atri Rudra, and Christopher R{\'e}.
\newblock Flashattention: Fast and memory-efficient exact attention with io-awareness.
\newblock \emph{Advances in neural information processing systems}, 35:\penalty0 16344--16359, 2022.

\bibitem[Del{\'e}tang et~al.(2022)Del{\'e}tang, Ruoss, Grau-Moya, Genewein, Wenliang, Catt, Cundy, Hutter, Legg, Veness, et~al.]{deletang2022neural}
Gr{\'e}goire Del{\'e}tang, Anian Ruoss, Jordi Grau-Moya, Tim Genewein, Li~Kevin Wenliang, Elliot Catt, Chris Cundy, Marcus Hutter, Shane Legg, Joel Veness, et~al.
\newblock Neural networks and the chomsky hierarchy.
\newblock \emph{arXiv preprint arXiv:2207.02098}, 2022.

\bibitem[Fan et~al.(2024)Fan, Du, Ramchandran, and Lee]{fan2024looped}
Ying Fan, Yilun Du, Kannan Ramchandran, and Kangwook Lee.
\newblock Looped transformers for length generalization.
\newblock \emph{arXiv preprint arXiv:2409.15647}, 2024.

\bibitem[Fu et~al.(2022)Fu, Dao, Saab, Thomas, Rudra, and R{\'e}]{fu2022hungry}
Daniel~Y Fu, Tri Dao, Khaled~K Saab, Armin~W Thomas, Atri Rudra, and Christopher R{\'e}.
\newblock Hungry hungry hippos: Towards language modeling with state space models.
\newblock \emph{arXiv preprint arXiv:2212.14052}, 2022.

\bibitem[Gao et~al.(2020)Gao, Biderman, Black, Golding, Hoppe, Foster, Phang, He, Thite, Nabeshima, Presser, and Leahy]{pile}
Leo Gao, Stella Biderman, Sid Black, Laurence Golding, Travis Hoppe, Charles Foster, Jason Phang, Horace He, Anish Thite, Noa Nabeshima, Shawn Presser, and Connor Leahy.
\newblock The {P}ile: An 800gb dataset of diverse text for language modeling.
\newblock \emph{arXiv preprint arXiv:2101.00027}, 2020.

\bibitem[Golowich et~al.(2025)Golowich, Jelassi, Brandfonbrener, Kakade, and Malach]{golowich2025role}
Noah Golowich, Samy Jelassi, David Brandfonbrener, Sham~M Kakade, and Eran Malach.
\newblock The role of sparsity for length generalization in transformers.
\newblock \emph{arXiv preprint arXiv:2502.16792}, 2025.

\bibitem[Graves et~al.(2014)Graves, Wayne, and Danihelka]{graves2014neural}
Alex Graves, Greg Wayne, and Ivo Danihelka.
\newblock Neural turing machines.
\newblock \emph{arXiv preprint arXiv:1410.5401}, 2014.

\bibitem[Gu \& Dao(2023)Gu and Dao]{gu2023mamba}
Albert Gu and Tri Dao.
\newblock Mamba: Linear-time sequence modeling with selective state spaces.
\newblock \emph{arXiv preprint arXiv:2312.00752}, 2023.

\bibitem[Gu et~al.(2021)Gu, Goel, and R{\'e}]{gu2021efficiently}
Albert Gu, Karan Goel, and Christopher R{\'e}.
\newblock Efficiently modeling long sequences with structured state spaces.
\newblock \emph{arXiv preprint arXiv:2111.00396}, 2021.

\bibitem[Guo et~al.(2025)Guo, Yang, Zhang, Song, Zhang, Xu, Zhu, Ma, Wang, Bi, et~al.]{guo2025deepseek}
Daya Guo, Dejian Yang, Haowei Zhang, Junxiao Song, Ruoyu Zhang, Runxin Xu, Qihao Zhu, Shirong Ma, Peiyi Wang, Xiao Bi, et~al.
\newblock Deepseek-r1: Incentivizing reasoning capability in llms via reinforcement learning.
\newblock \emph{arXiv preprint arXiv:2501.12948}, 2025.

\bibitem[Hochreiter \& Schmidhuber(1997)Hochreiter and Schmidhuber]{hochreiter1997long}
Sepp Hochreiter and J{\"u}rgen Schmidhuber.
\newblock Long short-term memory.
\newblock \emph{Neural computation}, 9\penalty0 (8):\penalty0 1735--1780, 1997.

\bibitem[Hou et~al.(2024)Hou, Brandfonbrener, Kakade, Jelassi, and Malach]{hou2024universal}
Kaiying Hou, David Brandfonbrener, Sham Kakade, Samy Jelassi, and Eran Malach.
\newblock Universal length generalization with turing programs.
\newblock \emph{arXiv preprint arXiv:2407.03310}, 2024.

\bibitem[Huang et~al.(2024)Huang, Yang, Bhattamishra, Sarrof, Krebs, Zhou, Nakkiran, and Hahn]{huang2024formal}
Xinting Huang, Andy Yang, Satwik Bhattamishra, Yash Sarrof, Andreas Krebs, Hattie Zhou, Preetum Nakkiran, and Michael Hahn.
\newblock A formal framework for understanding length generalization in transformers.
\newblock \emph{arXiv preprint arXiv:2410.02140}, 2024.

\bibitem[Hutchins et~al.(2022)Hutchins, Schlag, Wu, Dyer, and Neyshabur]{hutchins2022block}
DeLesley Hutchins, Imanol Schlag, Yuhuai Wu, Ethan Dyer, and Behnam Neyshabur.
\newblock Block-recurrent transformers.
\newblock \emph{Advances in neural information processing systems}, 35:\penalty0 33248--33261, 2022.

\bibitem[Jaech et~al.(2024)Jaech, Kalai, Lerer, Richardson, El-Kishky, Low, Helyar, Madry, Beutel, Carney, et~al.]{jaech2024openai}
Aaron Jaech, Adam Kalai, Adam Lerer, Adam Richardson, Ahmed El-Kishky, Aiden Low, Alec Helyar, Aleksander Madry, Alex Beutel, Alex Carney, et~al.
\newblock Openai o1 system card.
\newblock \emph{arXiv preprint arXiv:2412.16720}, 2024.

\bibitem[Jelassi et~al.(2023)Jelassi, d'Ascoli, Domingo-Enrich, Wu, Li, and Charton]{jelassi2023length}
Samy Jelassi, St{\'e}phane d'Ascoli, Carles Domingo-Enrich, Yuhuai Wu, Yuanzhi Li, and Fran{\c{c}}ois Charton.
\newblock Length generalization in arithmetic transformers.
\newblock \emph{arXiv preprint arXiv:2306.15400}, 2023.

\bibitem[Jelassi et~al.(2024)Jelassi, Brandfonbrener, Kakade, and Malach]{jelassi2024repeat}
Samy Jelassi, David Brandfonbrener, Sham~M Kakade, and Eran Malach.
\newblock Repeat after me: Transformers are better than state space models at copying.
\newblock In \emph{International Conference on Machine Learning}, pp.\  21502--21521. PMLR, 2024.

\bibitem[Jiang et~al.(2023)Jiang, Sablayrolles, Mensch, Bamford, Chaplot, de~las Casas, Bressand, Lengyel, Lample, Saulnier, Lavaud, Lachaux, Stock, Scao, Lavril, Wang, Lacroix, and Sayed]{jiang2023mistral7b}
Albert~Q. Jiang, Alexandre Sablayrolles, Arthur Mensch, Chris Bamford, Devendra~Singh Chaplot, Diego de~las Casas, Florian Bressand, Gianna Lengyel, Guillaume Lample, Lucile Saulnier, Lélio~Renard Lavaud, Marie-Anne Lachaux, Pierre Stock, Teven~Le Scao, Thibaut Lavril, Thomas Wang, Timothée Lacroix, and William~El Sayed.
\newblock Mistral 7b, 2023.
\newblock URL \url{https://arxiv.org/abs/2310.06825}.

\bibitem[Joulin \& Mikolov(2015)Joulin and Mikolov]{joulin2015inferring}
Armand Joulin and Tomas Mikolov.
\newblock Inferring algorithmic patterns with stack-augmented recurrent nets.
\newblock \emph{Advances in neural information processing systems}, 28, 2015.

\bibitem[Kaiser \& Sutskever(2015)Kaiser and Sutskever]{kaiser2015neural}
{\L}ukasz Kaiser and Ilya Sutskever.
\newblock Neural gpus learn algorithms.
\newblock \emph{arXiv preprint arXiv:1511.08228}, 2015.

\bibitem[Katharopoulos et~al.(2020)Katharopoulos, Vyas, Pappas, and Fleuret]{katharopoulos2020transformers}
Angelos Katharopoulos, Apoorv Vyas, Nikolaos Pappas, and Fran{\c{c}}ois Fleuret.
\newblock Transformers are rnns: Fast autoregressive transformers with linear attention.
\newblock In \emph{International conference on machine learning}, pp.\  5156--5165. PMLR, 2020.

\bibitem[Kazemnejad et~al.(2023)Kazemnejad, Padhi, Natesan~Ramamurthy, Das, and Reddy]{kazemnejad2023impact}
Amirhossein Kazemnejad, Inkit Padhi, Karthikeyan Natesan~Ramamurthy, Payel Das, and Siva Reddy.
\newblock The impact of positional encoding on length generalization in transformers.
\newblock \emph{Advances in Neural Information Processing Systems}, 36:\penalty0 24892--24928, 2023.

\bibitem[Lee et~al.(2023)Lee, Sreenivasan, Lee, Lee, and Papailiopoulos]{lee2023teaching}
Nayoung Lee, Kartik Sreenivasan, Jason~D Lee, Kangwook Lee, and Dimitris Papailiopoulos.
\newblock Teaching arithmetic to small transformers.
\newblock \emph{arXiv preprint arXiv:2307.03381}, 2023.

\bibitem[Li \& McClelland(2023)Li and McClelland]{li2023representations}
Yuxuan Li and James McClelland.
\newblock Representations and computations in transformers that support generalization on structured tasks.
\newblock \emph{Transactions on Machine Learning Research}, 2023.

\bibitem[Luo et~al.(2025)Luo, Zhang, Yuan, Zhao, Yang, Gu, Wu, Chen, Qiao, Long, et~al.]{luo2025large}
Junyu Luo, Weizhi Zhang, Ye~Yuan, Yusheng Zhao, Junwei Yang, Yiyang Gu, Bohan Wu, Binqi Chen, Ziyue Qiao, Qingqing Long, et~al.
\newblock Large language model agent: A survey on methodology, applications and challenges.
\newblock \emph{arXiv preprint arXiv:2503.21460}, 2025.

\bibitem[Malach(2023)]{malach2023auto}
Eran Malach.
\newblock Auto-regressive next-token predictors are universal learners.
\newblock \emph{arXiv preprint arXiv:2309.06979}, 2023.

\bibitem[Malekmohamadi~Faradonbe et~al.(2020)Malekmohamadi~Faradonbe, Safi-Esfahani, and Karimian-Kelishadrokhi]{malekmohamadi2020review}
Soroor Malekmohamadi~Faradonbe, Faramarz Safi-Esfahani, and Morteza Karimian-Kelishadrokhi.
\newblock A review on neural turing machine (ntm).
\newblock \emph{SN Computer Science}, 1\penalty0 (6):\penalty0 333, 2020.

\bibitem[McLeish et~al.(2024)McLeish, Bansal, Stein, Jain, Kirchenbauer, Bartoldson, Kailkhura, Bhatele, Geiping, Schwarzschild, et~al.]{mcleish2024transformers}
Sean McLeish, Arpit Bansal, Alex Stein, Neel Jain, John Kirchenbauer, Brian Bartoldson, Bhavya Kailkhura, Abhinav Bhatele, Jonas Geiping, Avi Schwarzschild, et~al.
\newblock Transformers can do arithmetic with the right embeddings.
\newblock \emph{Advances in Neural Information Processing Systems}, 37:\penalty0 108012--108041, 2024.

\bibitem[Merrill \& Sabharwal(2023)Merrill and Sabharwal]{merrill2023expressive}
William Merrill and Ashish Sabharwal.
\newblock The expressive power of transformers with chain of thought.
\newblock \emph{arXiv preprint arXiv:2310.07923}, 2023.

\bibitem[Nogueira et~al.(2021)Nogueira, Jiang, and Lin]{nogueira2021investigating}
Rodrigo Nogueira, Zhiying Jiang, and Jimmy Lin.
\newblock Investigating the limitations of transformers with simple arithmetic tasks.
\newblock \emph{arXiv preprint arXiv:2102.13019}, 2021.

\bibitem[Nye et~al.(2021)Nye, Andreassen, Gur-Ari, Michalewski, Austin, Bieber, Dohan, Lewkowycz, Bosma, Luan, et~al.]{nye2021show}
Maxwell Nye, Anders~Johan Andreassen, Guy Gur-Ari, Henryk Michalewski, Jacob Austin, David Bieber, David Dohan, Aitor Lewkowycz, Maarten Bosma, David Luan, et~al.
\newblock Show your work: Scratchpads for intermediate computation with language models.
\newblock 2021.

\bibitem[Ontanon et~al.(2021)Ontanon, Ainslie, Cvicek, and Fisher]{ontanon2021making}
Santiago Ontanon, Joshua Ainslie, Vaclav Cvicek, and Zachary Fisher.
\newblock Making transformers solve compositional tasks.
\newblock \emph{arXiv preprint arXiv:2108.04378}, 2021.

\bibitem[Park et~al.(2024)Park, Park, Xiong, Lee, Cho, Oymak, Lee, and Papailiopoulos]{park2024can}
Jongho Park, Jaeseung Park, Zheyang Xiong, Nayoung Lee, Jaewoong Cho, Samet Oymak, Kangwook Lee, and Dimitris Papailiopoulos.
\newblock Can mamba learn how to learn? a comparative study on in-context learning tasks.
\newblock \emph{arXiv preprint arXiv:2402.04248}, 2024.

\bibitem[Patil et~al.()Patil, Mao, Yan, Ji, Suresh, Stoica, and Gonzalez]{patilberkeley}
Shishir~G Patil, Huanzhi Mao, Fanjia Yan, Charlie Cheng-Jie Ji, Vishnu Suresh, Ion Stoica, and Joseph~E Gonzalez.
\newblock The berkeley function calling leaderboard (bfcl): From tool use to agentic evaluation of large language models.
\newblock In \emph{International Conference on Machine Learning}.

\bibitem[Qu et~al.(2024)Qu, Ning, An, Fan, Derr, Liu, Xu, and Li]{qu2024survey}
Haohao Qu, Liangbo Ning, Rui An, Wenqi Fan, Tyler Derr, Hui Liu, Xin Xu, and Qing Li.
\newblock A survey of mamba.
\newblock \emph{arXiv preprint arXiv:2408.01129}, 2024.

\bibitem[Ruiz \& Gu(2025)Ruiz and Gu]{ruiz2025understanding}
Ricardo~Buitrago Ruiz and Albert Gu.
\newblock Understanding and improving length generalization in recurrent models.
\newblock \emph{arXiv preprint arXiv:2507.02782}, 2025.

\bibitem[Ruoss et~al.(2023)Ruoss, Del{\'e}tang, Genewein, Grau-Moya, Csord{\'a}s, Bennani, Legg, and Veness]{ruoss2023randomized}
Anian Ruoss, Gr{\'e}goire Del{\'e}tang, Tim Genewein, Jordi Grau-Moya, R{\'o}bert Csord{\'a}s, Mehdi Bennani, Shane Legg, and Joel Veness.
\newblock Randomized positional encodings boost length generalization of transformers.
\newblock \emph{arXiv preprint arXiv:2305.16843}, 2023.

\bibitem[Shojaee et~al.(2025)Shojaee, Mirzadeh, Alizadeh, Horton, Bengio, and Farajtabar]{shojaee2025illusion}
Parshin Shojaee, Iman Mirzadeh, Keivan Alizadeh, Maxwell Horton, Samy Bengio, and Mehrdad Farajtabar.
\newblock The illusion of thinking: Understanding the strengths and limitations of reasoning models via the lens of problem complexity.
\newblock \emph{arXiv preprint arXiv:2506.06941}, 2025.

\bibitem[Sun et~al.(2023)Sun, Dong, Huang, Ma, Xia, Xue, Wang, and Wei]{sun2023retentive}
Yutao Sun, Li~Dong, Shaohan Huang, Shuming Ma, Yuqing Xia, Jilong Xue, Jianyong Wang, and Furu Wei.
\newblock Retentive network: A successor to transformer for large language models.
\newblock \emph{arXiv preprint arXiv:2307.08621}, 2023.

\bibitem[Toshniwal et~al.(2024{\natexlab{a}})Toshniwal, Du, Moshkov, Kisacanin, Ayrapetyan, and Gitman]{toshniwal2024openmathinstruct2}
Shubham Toshniwal, Wei Du, Ivan Moshkov, Branislav Kisacanin, Alexan Ayrapetyan, and Igor Gitman.
\newblock Openmathinstruct-2: Accelerating ai for math with massive open-source instruction data.
\newblock \emph{arXiv preprint arXiv:2410.01560}, 2024{\natexlab{a}}.

\bibitem[Toshniwal et~al.(2024{\natexlab{b}})Toshniwal, Moshkov, Narenthiran, Gitman, Jia, and Gitman]{toshniwal2024openmathinstruct}
Shubham Toshniwal, Ivan Moshkov, Sean Narenthiran, Daria Gitman, Fei Jia, and Igor Gitman.
\newblock Openmathinstruct-1: A 1.8 million math instruction tuning dataset.
\newblock \emph{Advances in Neural Information Processing Systems}, 37:\penalty0 34737--34774, 2024{\natexlab{b}}.

\bibitem[Vaswani et~al.(2017)Vaswani, Shazeer, Parmar, Uszkoreit, Jones, Gomez, Kaiser, and Polosukhin]{vaswani2017attention}
Ashish Vaswani, Noam Shazeer, Niki Parmar, Jakob Uszkoreit, Llion Jones, Aidan~N Gomez, {\L}ukasz Kaiser, and Illia Polosukhin.
\newblock Attention is all you need.
\newblock \emph{Advances in neural information processing systems}, 30, 2017.

\bibitem[Waleffe et~al.(2024)Waleffe, Byeon, Riach, Norick, Korthikanti, Dao, Gu, Hatamizadeh, Singh, Narayanan, et~al.]{waleffe2024empirical}
Roger Waleffe, Wonmin Byeon, Duncan Riach, Brandon Norick, Vijay Korthikanti, Tri Dao, Albert Gu, Ali Hatamizadeh, Sudhakar Singh, Deepak Narayanan, et~al.
\newblock An empirical study of mamba-based language models.
\newblock \emph{arXiv preprint arXiv:2406.07887}, 2024.

\bibitem[Wei et~al.(2022)Wei, Wang, Schuurmans, Bosma, Xia, Chi, Le, Zhou, et~al.]{wei2022chain}
Jason Wei, Xuezhi Wang, Dale Schuurmans, Maarten Bosma, Fei Xia, Ed~Chi, Quoc~V Le, Denny Zhou, et~al.
\newblock Chain-of-thought prompting elicits reasoning in large language models.
\newblock \emph{Advances in neural information processing systems}, 35:\penalty0 24824--24837, 2022.

\bibitem[Wies et~al.(2022)Wies, Levine, and Shashua]{wies2022sub}
Noam Wies, Yoav Levine, and Amnon Shashua.
\newblock Sub-task decomposition enables learning in sequence to sequence tasks.
\newblock \emph{arXiv preprint arXiv:2204.02892}, 2022.

\bibitem[Yang et~al.(2024{\natexlab{a}})Yang, Jimenez, Wettig, Lieret, Yao, Narasimhan, and Press]{yang2024sweagent}
John Yang, Carlos~E Jimenez, Alexander Wettig, Kilian Lieret, Shunyu Yao, Karthik~R Narasimhan, and Ofir Press.
\newblock {SWE}-agent: Agent-computer interfaces enable automated software engineering.
\newblock In \emph{The Thirty-eighth Annual Conference on Neural Information Processing Systems}, 2024{\natexlab{a}}.
\newblock URL \url{https://arxiv.org/abs/2405.15793}.

\bibitem[Yang et~al.(2025)Yang, Leret, Jimenez, Wettig, Khandpur, Zhang, Hui, Press, Schmidt, and Yang]{yang2025swe}
John Yang, Kilian Leret, Carlos~E Jimenez, Alexander Wettig, Kabir Khandpur, Yanzhe Zhang, Binyuan Hui, Ofir Press, Ludwig Schmidt, and Diyi Yang.
\newblock Swe-smith: Scaling data for software engineering agents.
\newblock \emph{arXiv preprint arXiv:2504.21798}, 2025.

\bibitem[Yang et~al.(2024{\natexlab{b}})Yang, Kautz, and Hatamizadeh]{yang2024gated}
Songlin Yang, Jan Kautz, and Ali Hatamizadeh.
\newblock Gated delta networks: Improving mamba2 with delta rule.
\newblock \emph{arXiv preprint arXiv:2412.06464}, 2024{\natexlab{b}}.

\bibitem[Yang et~al.(2024{\natexlab{c}})Yang, Wang, Zhang, Shen, and Kim]{yang2024parallelizing}
Songlin Yang, Bailin Wang, Yu~Zhang, Yikang Shen, and Yoon Kim.
\newblock Parallelizing linear transformers with the delta rule over sequence length.
\newblock \emph{Advances in neural information processing systems}, 37:\penalty0 115491--115522, 2024{\natexlab{c}}.

\bibitem[Yao et~al.(2023)Yao, Zhao, Yu, Du, Shafran, Narasimhan, and Cao]{yao2023react}
Shunyu Yao, Jeffrey Zhao, Dian Yu, Nan Du, Izhak Shafran, Karthik Narasimhan, and Yuan Cao.
\newblock React: Synergizing reasoning and acting in language models.
\newblock In \emph{International Conference on Learning Representations (ICLR)}, 2023.

\bibitem[Yehudai et~al.(2025)Yehudai, Eden, Li, Uziel, Zhao, Bar-Haim, Cohan, and Shmueli-Scheuer]{yehudai2025survey}
Asaf Yehudai, Lilach Eden, Alan Li, Guy Uziel, Yilun Zhao, Roy Bar-Haim, Arman Cohan, and Michal Shmueli-Scheuer.
\newblock Survey on evaluation of llm-based agents.
\newblock \emph{arXiv preprint arXiv:2503.16416}, 2025.

\bibitem[Zhou et~al.(2023)Zhou, Bradley, Littwin, Razin, Saremi, Susskind, Bengio, and Nakkiran]{zhou2023algorithms}
Hattie Zhou, Arwen Bradley, Etai Littwin, Noam Razin, Omid Saremi, Josh Susskind, Samy Bengio, and Preetum Nakkiran.
\newblock What algorithms can transformers learn? a study in length generalization.
\newblock \emph{arXiv preprint arXiv:2310.16028}, 2023.

\bibitem[Zhou et~al.(2024)Zhou, Alon, Chen, Wang, Agarwal, and Zhou]{zhou2024transformers}
Yongchao Zhou, Uri Alon, Xinyun Chen, Xuezhi Wang, Rishabh Agarwal, and Denny Zhou.
\newblock Transformers can achieve length generalization but not robustly.
\newblock \emph{arXiv preprint arXiv:2402.09371}, 2024.

\end{thebibliography}
\bibliographystyle{iclr2026_conference}

\newpage

\appendix
\section{More Definitions}
\label{app:defs}
Here we give a more complete and formal definition for models with tool-use. We start by defining a tool-use \emph{oracle}, which will receive tool-use commands and will return an observation that corresponds to the execution of the command. This oracle will be stateful, meaning that its responses can vary depending on the \emph{memory} of the oracle, which can be updated based on the commands it receives\footnote{For example, the oracle can receive a command for updating the content of a file on disk, which will affect its memory and hence future requests for reading the contents of the changed file.}. Let $\gM$ be some set which will correspond to the set of memories of the oracle. We denote by $M_t \in \gM$ the memory of the oracle after receiving $t$ commands, and let $M_0$ be the initial memory of the oracle. Importantly, we let the initial memory of the oracle depend on the input (e.g., the input can be stored in the memory of the oracle). For some memory $M_t \in \gM$, we define $\gO_{M_t} : \Sigma^* \to \Sigma^*$ to be the mapping from tool calls to observations, given memory $M_t$.

We augment the dictionary with additional tokens: $\TOOL, \sTOOL,\OBS, \sOBS,$ $\THINK,\sTHINK \in \Sigma$. At any point in the generation, the model $h$ can issue a call to a tool by generating a sequence of the form $\TOOL, \vz, \sTOOL$, for some $\vz \in \Sigma^*$ which encodes the tool command. The command $\vz$ is passed to the tool oracle, and the resulting observation that is then appended to the context of the model, with the format: $\OBS, \gO_{M_t}(\vz), \sOBS$. The model can also generate thoughts/reasoning, generated as a sequence $\THINK,\vz,\sTHINK$. All other tokens (tokens that are not tool-commands, observations or thinking tokens) are considered output tokens, and are appended to the output stream.

In the CoT-only setting, the model is only allowed to use thinking and output tokens. In the single-turn setting, the model can issue a single tool command, and can start generating the output after receiving the observation from the command (but can think before, after and during the output generation). In the interactive setting, a model can issue a tool call at any point in the generation, possibly interleaved with output tokens and tool commands. When evaluating the output, we ignore all tool commands, observations and thoughts and only consider the output stream at the end of generation.

\section{Proofs}
\label{app:proofs}

\subsection{Proof of Theorem \ref{thm:lower_bound}}

Before we introduce the complete proof of the theorem, we prove a simple version of the Theorem for the case where the output rule of the GSSM is deterministic. This proof is simpler than the more general stochastic setting, and captures the key principles for this result.

\begin{proof}[Proof of Theorem \ref{thm:lower_bound} for deterministic GSSMs.]
    Let $\gS$ be the state space of $h$. By definition, there exists some $n_0$ such that for all $n \ge n_0$ it holds that $\supp_\alpha(f(\gD_n)) > \abs{\gS}$. Now, fix some $n \ge n_0$, and let $A \subseteq \Sigma^*$ be all the possible outputs of $h$. Note that the output is determined by the state of $h$ before the output tokens are generated. Therefore, $\abs{A} \le \abs{\gS} < \supp_\alpha(f(\gD_n))$ and so we have $$\Pr_{\vx \sim \gD_n}[f(\vx) \in A] = \Pr_{\vy \sim f(\gD_n)}[\vy \in A] = f(\gD_n)(A) < \alpha$$ Finally, we have
    \begin{align*}
    \Pr_{\vx \sim \gD_n}\left[f(\vx) = h(\vx)\right] &= \Pr\left[f(\vx) = h(\vx) | f(\vx) \in A \right]\Pr\left[f(\vx) \in A\right] \\
    &+ \underbrace{\Pr\left[f(\vx) = h(\vx) | f(\vx) \notin A \right]}_{=0}\Pr\left[f(\vx) \notin A\right] \le \gD_n(A) < \alpha
    \end{align*}
    and so $\err_n(h) \ge 1-\alpha$.
\end{proof}

We now show the proof of the theorem for the general (stochastic) setting:

\begin{proof}[Proof of Theorem \ref{thm:lower_bound}]
    Let $\gS$ be the state space of $h$, and we assume that $h$ operates either in the CoT-only or the single-turn setting. Denote by $U(\vx)$ the state of the model before generating the first output token. The model $h$ can generate thinking tokens and/or a single tool command before generating the first output token, and therefore $U(\vx)$ is a random variable that depends on $\vx$ and the randomness of $h$.\footnote{We assume the oracle (e.g., the environment) is deterministic, but we can be easily extend this result to capture a stochastic oracle.} Let $R(s)$ be the distribution of \emph{outputs} (i.e. values of the output stream) generated by the model $h$ if it is at state $s$ before generating the first output token. Treating $h^{(\mathrm{out})}(\vx)$ as a random variable over outputs, we note that it depends only on the state after parsing $\vx$, and therefore $h^{(\mathrm{out})}(\vx) = R(U(\vx))$.
    Additionally, we denote the conditional distribution over outputs induced by $h^{(\mathrm{out})}$ with $p(\vy | \vx)$. Again, since the future generation of the model depends only on the state, we have $p(\vy | \vx) = p(\vy | U(\vx))$.
    
    Now, by definition, there exists some $n_0$ such that for all $n \ge n_0$ it holds that $\supp_\alpha(f(\gD_n)) > \abs{\gS}$. Fix some $n \ge n_0$. Fix some $s \in \gS$. Let $\vy_s$ be an output with maximal probability under the distribution $\gD_n$, conditioned on the event that $U(\vx) = s$: $$\vy_s = \arg \max_{\vy} \Pr_{\vx \sim \gD_n}[f(\vx) = \vy | U(\vx) = s]$$
    Denote by $A$ the set of maximal probability outputs $A = \{\vy_s ~:~ s \in \gS\}$. Note that $\abs{A} \le \abs{\gS} < \supp_\alpha(f(\gD_n))$ and so we have $$\Pr_{\vx \sim \gD_n}[f(\vx) \in A] = \Pr_{\vy \sim f(\gD_n)}[\vy \in A] = f(\gD_n)(A) < \alpha$$
    
    Observe that: 
    \begin{align*}
        \Pr[f(\vx) = h(\vx) | U(\vx) = s] &= 
        \sum_{\vy \in \gY_n}\E[1_{h(\vx) = \vy} \cdot 1_{f(\vx) = \vy} | U(\vx) = s] \\&= 
        \sum_{\vy \in \gY_n}\E[1_{R(U(\vx)) = \vy} \cdot 1_{f(\vx) = \vy} | U(\vx) = s] \\&=
        \sum_{\vy \in \gY_n}\E[1_{R(s) = \vy} \cdot 1_{f(\vx) = \vy} | U(\vx) = s] \\&=
        \sum_{\vy \in \gY_n} p(\vy|s) \cdot \Pr_{\vx \sim \gD_n}[f(\vx) = \vy | U(\vx) = s] \\
        &\le \sum_{\vy \in \gY_n} p(\vy|s) \cdot \Pr[f(\vx) = \vy_s | U(\vx) = s] \\
        &\le \Pr[f(\vx) = \vy_s | U(\vx) = s]
    \end{align*}
    
    Where in the 4th equality we use the fact that the variables are independent. Therefore, we have:
    \begin{align*}
    \Pr_{\vx \sim \gD_n}\left[f(\vx) = h(\vx)\right] &= \sum_{s \in \gS} \Pr[f(\vx) = h(\vx) | U(\vx) = s] \Pr[U(\vx) = s] \\
    &\le \sum_{s \in \gS}  \Pr[f(\vx) = \vy_s | U(\vx) = s] \Pr[U(\vx) = s] \\
    &\le \sum_{s \in \gS} \Pr[f(\vx) \in A | U(\vx) = s] \Pr[U(\vx) = s]  = \Pr[f(\vx) \in A]
    < \alpha
    \end{align*}
    and so $\err_n(h) \ge 1-\alpha$.
\end{proof}

\subsection{Proof of Theorem \ref{thm:length_generalization}}
\begin{proof}[Proof of Theorem \ref{thm:length_generalization}]
    First, let us define the oracle $\gO$. The memory of the oracle at iteration $t$ holds a sequence of tokens $\vm_t \in \Sigma^*$, and additionally some index $i_t \in \mathbb{N}$. At first iteration, we set $\vm_0 = \vx$ and $i_0 = 0$. The oracle $\gO$ accepts the following commands:
    \begin{itemize}
        \item \texttt{read}: outputs the $i_t$-th token in $m_t$. If $i_t > \abs{\vm_t}$, output $\EOS$.
        \item \texttt{write $\sigma$}: updates the $i_t$-th token of $m_t$ to be $\sigma$.
        \item \texttt{move\_left}, \texttt{move\_right}:  adds/subtracts $1$ from $i_t$.
    \end{itemize}

    Next, we describe the training distributions $\gP_n$.
    Since $f$ is tractable, there exists some Turing machine $\gT$ that computes $f$. By definition, the machine halts for every input, and we can assume w.l.o.g. that it halts when the head is at position $0$. Let $Q$ be the (finite) set of states of $\gT$, and let $q_0$ be the initial state. We assume the dictionary $\Sigma$ contains the following symbols: $\{0,1,\STATE, \sSTATE\} \in \Sigma$. For each state $q \in Q$, we define the encoding of the state $\mathrm{enc}(q) = \STATE \vz_q \sSTATE$, where $\vz_q \in \{0,1\}^{\log(\abs{Q})}$ is a binary encoding of the state. Then, for some input $\vx \sim \gD_n$, we construct a CoT of $\vx$, denoted by $F(\vx)$, that will capture the ``trace'' of the machine $\gT$:
    \begin{itemize}
        \item The sequence $F(\vx)$ begins with: $\THINK\mathrm{enc}(q_0) \sTHINK\TOOL \texttt{read} \sTOOL$.
        \item In each step of the Turing machine processing $\vx$, we add to $F(\vx)$ the sequence:
        \begin{align*}\THINK &\enc(q) \sTHINK\TOOL \texttt{read} \sTOOL \\ &\OBS \sigma \sOBS\TOOL \texttt{write}~\sigma' \sTOOL\end{align*}
        
        where $q$ is the current state, and $\sigma'$ is the next symbol to write when reading $\sigma$ in state $q$. Additionally, we add $\TOOL \texttt{move\_left} \sTOOL$ if the machine moves the head to the left, and otherwise $\TOOL \texttt{move\_right} \sTOOL$.
        \item When the machine reaches a halting state, for every $i=1\dots \abs{f(\vx)}$ we add: $$\TOOL \texttt{move\_right} \sTOOL \TOOL \texttt{read} \sTOOL \OBS f(\vx)_i \OBS f(\vx)_i$$
    \end{itemize}

    Note that since the machine computes $f(\vx)$ it will be written on its tape when it reaches a valid state. Therefore, it is easy to verify that the memory of the oracle $\gO$ at step $t$ will hold the state of the tape and the correct position of the head, and that all the tool observations will be correct. Finally, $\vx \sim \gD_n$ and $F(\vx)$ together define the distribution $\gP_n$ for all $n$, and it is indeed a training distribution for the task (since the non-tool tokens after the $\ANS$ token correspond to the correct output $f(\vx)$).
    
    Next, we will show that a simple tool-SSM algorithm can achieve length generalization on this task. 
    Let $\{(\vx_1, F(\vx_1)), \dots, (\vx_m, F(\vx_m)\}$ be the set of examples observed by the algorithm. 
    Let $\widehat{A}$ be the set of all pairs of state encodings and symbols that appear together in $F(\vx_i)$ in some $\vx_i$:
    \[
    \widehat{A} := \{(q, \sigma) ~:~ \exists i,~\THINK \enc(q) \sTHINK \TOOL \texttt{read} \sTOOL \OBS \sigma \sOBS \in F(\vx_i)\}
    \]
    Note that for every $(q,\sigma) \in \widehat{A}$ there is a single symbol $\sigma'$ and a single command $\vd \in \{\texttt{move\_left}, \texttt{move\_right}\}$ and a single state $q'$ that follow $(q,\sigma)$ in the trace (corresponding to the operation of the Turing machine). Let $R$ be the function mapping $(q, \sigma)$ to $(q', \sigma', \vd)$. Note that both $\widehat{A}$ and $R$ can be encoded with fixed (finite) memory. Therefore, we define a GSSM $h_{\widehat{A},R}$ that generates tokens as follows:
    \begin{itemize}
        \item Immediately after the input, generate: $\THINK \enc(q_0) \sTHINK \TOOL \texttt{read} \sTOOL$.
        \item Following each response to a $\texttt{read}$ command, generate: $$\THINK \enc(q') \sTHINK\TOOL \texttt{write} ~\sigma' \sTOOL \TOOL \vd \sTOOL$$
        where $q', \sigma', \vd = R(q,\sigma)$, for the $\sigma$ returned by the tool oracle.
        \item When a halting state is reached, generate the sequence:
        $$\TOOL \texttt{move\_right} \sTOOL \TOOL \texttt{read} \sTOOL$$
        and following the observation $\OBS\sigma\sOBS$, output $\sigma$ (if $\sigma = \EOS$ we halt the generation).
        \item If at some point we observe a pair $q, \sigma \notin \widehat{A}$, output $\EOS$.
    \end{itemize}

    Denote by $A(\vx) \subseteq Q \times \Sigma$ the set of state-symbol pairs observed by $\gT$ when processing $\vx$. 
    Its easy to verify that for every $\vx$ s.t. $A(\vx) \in \widehat{A}$, the GSSM $h_{\widehat{A},R}$ will exactly recover $F(\vx)$. Therefore, the following lemma suffices for proving the theorem:

    \begin{lemma}
        Fix some $\epsilon, \delta \in (0,1)$.
        There exists some $n_0$ s.t. and $m$ s.t. w.p. at least $1-\delta$ over sampling from $\gP_1, \dots, \gP_{n_0}$ it holds that:
    \[
    \forall n \ge n_0\Pr_{\vx \sim \gD_{n}}\left[A(\vx) \subseteq \widehat{A}\right] > 1-\epsilon
    \]
    \end{lemma}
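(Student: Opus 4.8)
The plan is to reduce the statement to a finite covering problem using the single structural fact that the set of state--symbol pairs $Q \times \Sigma$ is finite; write $K := \abs{Q}\cdot\abs{\Sigma}$. For a pair $(q,\sigma)$ and complexity $n$, let $\phi_n(q,\sigma) := \Pr_{\vx \sim \gD_n}\left[(q,\sigma) \in A(\vx)\right]$ denote the probability that $\gT$ visits this pair while processing a random input of complexity $n$. The event $A(\vx) \not\subseteq \widehat{A}$ is exactly the event that some visited pair is absent from $\widehat{A}$, so a union bound over the at most $K$ missing pairs gives
\[
\Pr_{\vx \sim \gD_n}\left[A(\vx) \not\subseteq \widehat{A}\right] \le \sum_{(q,\sigma) \notin \widehat{A}} \phi_n(q,\sigma).
\]
Everything then reduces to guaranteeing that every pair carrying non-negligible probability mass, \emph{uniformly over all evaluation complexities} $n$, lands in $\widehat{A}$.

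To make ``non-negligible uniformly over $n$'' precise, I would split $Q \times \Sigma$ using the threshold $\tau := \epsilon/K$ and the supremum over complexity. Call a pair \emph{heavy} if $\sup_n \phi_n(q,\sigma) \ge \tau$ and \emph{light} otherwise, and let $H$ be the (finite) set of heavy pairs. For a light pair, $\phi_n(q,\sigma) < \tau$ for \emph{every} $n$ by definition of the supremum, so if $\widehat{A} \supseteq H$ then every missing pair is light and the displayed sum is strictly below $K\tau = \epsilon$ for all $n \ge n_0$ simultaneously. Thus it suffices to cover the finitely many heavy pairs.

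Covering heavy pairs is where I would spend the sampling budget. For each $(q,\sigma) \in H$ the supremum is attained to within a factor of two at some finite complexity, i.e. there is $n_{(q,\sigma)}$ with $\phi_{n_{(q,\sigma)}}(q,\sigma) \ge \tau/2$; since $H$ is finite I set $n_0 := \max_{(q,\sigma)\in H} n_{(q,\sigma)} < \infty$. I would then have the algorithm draw $m_0$ samples from each of $\gP_1, \dots, \gP_{n_0}$. A single draw from $\gP_{n_{(q,\sigma)}}$ records the pair $(q,\sigma)$ in its trace with probability at least $\tau/2$, so the probability of missing it across all $m_0$ draws is at most $(1-\tau/2)^{m_0}$. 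A union bound over the at most $K$ heavy pairs yields $\Pr[\widehat{A} \not\supseteq H] \le K(1-\tau/2)^{m_0}$, which is driven below $\delta$ by taking $m_0 = O\!\left(\tau^{-1}\log(K/\delta)\right)$; the total sample size $m = n_0 m_0$ is finite. On the high-probability event $\widehat{A} \supseteq H$, the previous paragraph delivers the claimed bound for all $n \ge n_0$.

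The main obstacle --- and the only genuinely nontrivial point --- is the uniformity over all evaluation lengths $n \ge n_0$: the training data is drawn only from complexities bounded by $n_0$, yet the guarantee must hold for arbitrarily large $n$. The resolution is precisely the $\sup_n$ definition together with the finiteness of $Q \times \Sigma$: light pairs are controlled uniformly in $n$ because their visit probability is capped at $\tau$ for \emph{every} complexity, and there are only finitely many of them, so no adversarial choice of large $n$ can accumulate more than $\epsilon$ mass on unseen pairs. I would double-check two technical points: that each heavy pair is witnessed at some \emph{finite} complexity (so that $n_0$ is finite, using finiteness of $H$), and that the per-example probability of \emph{observing} a visited pair inside the trace $F(\vx)$ equals $\phi_n$, which holds since $F(\vx)$ records exactly the state--symbol pairs $\gT$ encounters on $\vx$.
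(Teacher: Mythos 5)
Your proposal is correct and follows essentially the same route as the paper's proof: both split $Q\times\Sigma$ into heavy and light pairs via a threshold on $\sup_n \Pr_{\vx\sim\gD_n}[(q,\sigma)\in A(\vx)]$, pick $n_0$ as the maximum of finite complexities witnessing each heavy pair, cover the heavy pairs by sampling from $\gP_1,\dots,\gP_{n_0}$ with an exponential-miss-probability plus union bound, and control the light pairs uniformly over all $n$ by a union bound summing to at most $\epsilon$. The only difference is bookkeeping of constants (your threshold $\tau=\epsilon/K$ with witness level $\tau/2$ versus the paper's $2\epsilon/M$ with witness level $\epsilon/M$), and your choice in fact closes cleanly, whereas the paper's final union bound as written slips a factor of 2 on the light pairs.
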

    \begin{proof}
    For every pair of symbols $\sigma \in \Sigma$ and state $q \in Q$, denote $p_n(\sigma, q) := \Pr_{\vx \sim \gD_n}[(q,\sigma) \in A(\vx)]$ the probability over sampling $\vx \sim \gD_n$ that the machine $\gT$ reads a symbol $\sigma$ while it is in state $q$, when processing $\vx$. Let $M := \abs{Q} \cdot \abs{\Sigma}$. Denote: $$A_\epsilon = \left\lbrace(q,\sigma) \in Q \times \Sigma \mathrm{~~s.t.~} \sup_n p_n(q,\sigma) \ge 2\epsilon/M\right\rbrace$$
    Now, for every $q,\sigma \in A_\epsilon$, let $n_0(q,\sigma)$ be the minimal $n$ s.t. $p_n(q,\sigma) \ge \epsilon/M$. Let $n_0 = \max\{n_0(q,\sigma) \}_{(q,\sigma) \in A_\epsilon}$. Let $m = \frac{n_0M\log(M/\delta)}{\epsilon}$, and we will sample $m' = m/n_0 = \frac{M\log(M/\delta)}{\epsilon} $ examples from each of $\gD_{1}, \dots, \gD_{n_0}$. Fix some $(q, \sigma) \in A_\epsilon$.
    
    \textbf{Claim:} w.p. at least $1-\delta/M$ we have $(q,\sigma) \in \widehat{A}$.
    
    \textbf{Proof:} Note that $n_0(q,\sigma) \le n_0$, and therefore we sample $m'$ examples from $\gD_{n_0(q,\sigma)}$. Let $p := \Pr_{\vx \sim \gD_{n_0(q,\sigma)}}[(q,\sigma) \in A(\vx)]$ and by definition $p \ge \epsilon/M$. Therefore, for the $m'$ samples we draw, the probability that we do not encounter $(q,\sigma)$ in any of the traces is at most $(1-p)^{m'} \le (1-\epsilon/M)^{m'} \le \exp(-m'\epsilon/M) \le \delta/M$.

    From the above claim, using the union bound, we get that w.p. at least $1-\delta$ we have $A_\epsilon \subseteq \widehat{A}$. Assume this holds, and fix some $n \ge n_0$. For every $(q,\sigma) \in Q \times \Sigma \setminus A_\epsilon$ it holds that $\Pr_{\vx \sim \gD_n}\left[(q,\sigma) \in A(\vx)\right] \le \epsilon/M$. From the union bound, the probability over $\vx \sim \gD_n$ that there exists some $(q,\sigma) \notin A_\epsilon \subseteq \widehat{A}$ s.t. $(q,\epsilon) \in A(\vx)$ is at most $\abs{Q \times \Sigma \setminus A_\epsilon} \epsilon / M \le \epsilon$. Therefore, the required follows.
    \end{proof}
    
    From the above lemma the proof of Theorem \ref{thm:length_generalization} follows.
    
\end{proof}

\section{Architecture and training details}
\label{app:experiments_details}

We train the following architectures for the synthetic experiments:
\begin{itemize}
    \item \textbf{Mamba-130M} (\url{https://huggingface.co/state-spaces/mamba-130m-hf}): a selective state-space (SSM) language model. We use a 24-layer, 1536-d intermediate size and 768-d model size configuration to match the Transformer baselines while retaining linear-time sequence modeling.
    \item \textbf{LSTM}: a multi-layer recurrent baseline sized to roughly comparable capacity (4 layers, hidden size 1536) to probe how classical RNNs fare on our trajectory-style tasks.
    \item \textbf{GRU}: a gated-recurrent baseline (4 layers, hidden size 1536) offering a stronger RNN comparator with fewer parameters per unit than LSTM.
    \item \textbf{Pythia (GPT-NeoX style)} (\url{https://huggingface.co/EleutherAI/pythia-160m}): a decoder-only Transformer from the Pythia scaling suite. We adopt a 24-layer, 1536-d intermediate size, 768-d model size and 8-head model variant with RoPE, roughly matching Mamba’s scale.
    \item \textbf{Mistral-style Transformer} (\url{https://huggingface.co/mistralai/Mistral-7B-v0.1}): a modern decoder-only Transformer with sliding-window (512) sparse attention, utilizing RoPE. We use a scaled-down 8-layer, 1536-d intermediate size and 768-d model size.
\end{itemize}

For the synthetic experiment, we perform hyper-parameter search over learning rate, batch size and weight decay. We choose $\mathrm{learning\_rate} \in \{0.0005, 0.0001, 0.0003, 0.0005, 0.001, 0.003, 0.005\}$, $\mathrm{batch\_size} \in \{128,256,512,1024\}$, $\mathrm{weight\_decay} \in \{0, 0.01\}$ and fix the number training steps to be $2,000$. We run each experiment with 2 seeds, and report the accuracy of the best model. For Tower of Hanoi experiments, due to their sensitivity, we exceptionally used 10 seeds for Mamba and Pythia models and 3 seeds for other architectures.
For the code fixing experiment, we finetune a pretrained Mamba-1.4b and Pythia-1.4b, both trained on The Pile \citep{pile}, with learning rate $0.0001$, weight decay $0.01$, batch size $512$ and $200$ training steps. For all experiments, we use a single node with 8 H100 GPUs.

\section{Synthetic Experiments Details}

\subsection{Memory Tool Definitions}
\label{app:memory_tool}

As discussed in Section \ref{sec:experiments}, we use either a pointer-based or a search-based memory tool to augment the memory of the model. We now describe how the model interacts with the memory tool, and how we differentiate between \emph{thoughts}, \emph{outputs}, \emph{commands} and \emph{observations}. We generally try to reduce the number of tokens by using dedicated command tokens, and differentiate between output, thoughts and observation tokens based on the context (instead of using open/closing tags).

\paragraph{Pointer-based Memory.}
The commands for this memory are given as special tokens that the model can output, e.g. $\mathrm{[pointer1.read()]}$ or $\mathrm{[pointer2.move\_left()]}$. A read command will be immediately followed by a single \emph{observation} token, that is the token read by the pointer at its current position.
All other tokens (tokens that are not command tokens or observation tokens, which always immediately follow a read command) are either \emph{thoughts} or \emph{outputs}. We use a single token $\ANS$ that indicates the final answer of the model, where all tokens before the $\ANS$ token are considered \emph{thoughts} and all tokens after the $\ANS$ token are considered \emph{outputs}. Both \emph{thoughts} and \emph{outputs} are appended to the context memory, and the pointers can move beyond the input context, and start reading \emph{thought} or \emph{output} tokens that were previously generated by the model. \emph{Commands} and \emph{observations} are discarded and are not appended to the external memory (but of course do affect the internal memory and representation of the model). The model can freely interleave commands, thoughts and outputs, and therefore the model can interact with the memory while producing the answers.

\paragraph{Search-based Memory.}
This memory tool allows the model to search for a given pattern in the context. A search command is a sequence of tokens of the form: $[\mathrm{COMMAND}]\texttt{find}[\mathrm{VALUE}]\vx$, where $\vx$ is some sequence of tokens to search for. Following the search command, the model will receive a set of \emph{observations} of the form: $[\mathrm{OBSERVATION}]\vz_1, \dots, \vz_k$, where $\vz_1, \dots, \vz_k$ are all the lines in the memory context that contain the string $\vx$ (similar to the operation of a \texttt{grep} command). As before, all other tokens are either \emph{thoughts} or \emph{outputs}, and are appended to the memory and can be searched for in future iterations. In this case we take the output to be the last line generated by the model.

\subsection{Logical Reasoning Task}
\label{app:logical_reasoning}

As described in Section \ref{sec:experiments}, we generate a random logical computation graph with $k=3$ input nodes, where each intermediate node is a boolean expression over one or two variables or their negation. The graph is encoded as a python code given to the model as input. Illustration of the graph and the code are shown in Figure \ref{fig:logical_graph_illustration}.

\begin{figure}[h]
    \centering
    \includegraphics[width=0.8\linewidth]{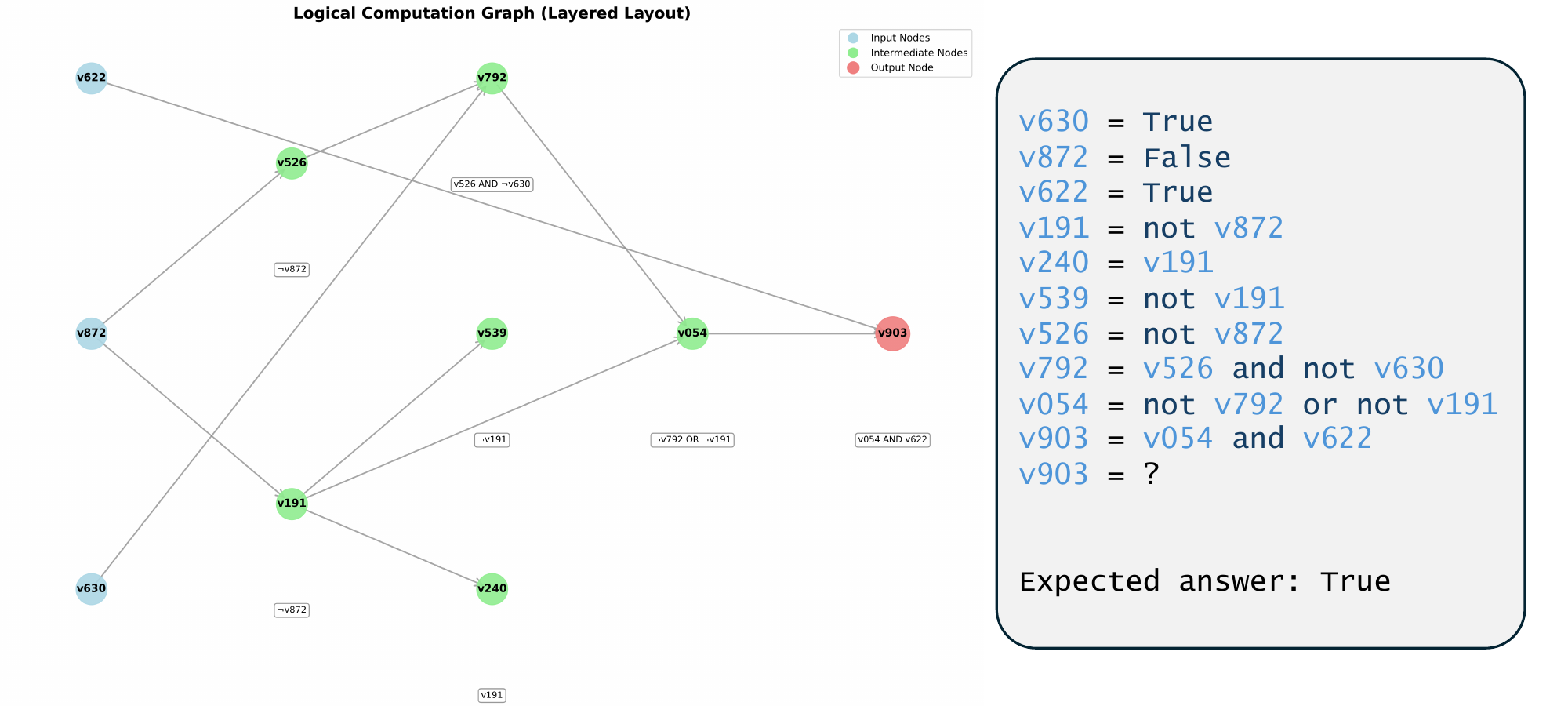}
    \caption{Example of a logical reasoning graph and its encoding.}
    \label{fig:logical_graph_illustration}
\end{figure}

\subsection{Tool-Use Algorithms}
\label{app:cot_algorithms}

We describe the synthetically generated tool-use trajectories for solving the different tasks presented in Section~\ref{sec:experiments}.

\paragraph{Multi-Digit Addition.} We follow the standard long addition algorithm, summing one-digit from right to left while keeping the ``carry'' digit. The model uses pointer-based memory with two pointers, and performs the following steps:
\begin{enumerate}
    \item Move each pointer to the least significant digit of each summand, where the first pointer points to the digit of the first summand, and the second pointer to the second summand. To do this, we move the first pointer until we read the token $+$, and move the second pointer until we read the token $=$.
    \item Read one digit from each summand, compute the sum of the digits and the carry from previous iteration (if it exists), output the new sum and carry as \emph{thoughts}, and move both pointers to the left. Stop each pointer if it reaches a non-digit token. If both pointers reached non-digit tokens, output $\ANS$ and move to the next step.
    \item At this step we have the sum written in reverse in memory, along with the carry digits from each iteration. To write the final output in reverse, we move the second pointer to the right until it reaches the $\ANS$ token, then start moving to the left, each iteration outputting the sum digit, until the pointer reaches the $=$ token.
\end{enumerate}

\paragraph{Multi-Digit Multiplication.} We follow the long multiplication algorithm for multiplying an $n$-digit number by a $k$-digit number (for fixed $k$). We use a pointer-based memory with $\max(k,2)$ pointers. The algorithm executes the following steps:
\begin{enumerate}
    \item Move the first pointer to the least significant digit of the first operand, and the second pointer to the least significant digit of the second operand.
    \item Move the first pointer to the left, each time reading a digit and multiplying it with the digit of the second pointer. Add the result to the previous carry, and write it together with the new carry. If we reach the most significant digit of the first operand, move the first pointer back to the least significant digit, move the second pointer one position to the left, output a $+$ sign and zeros as required (depending on which digit from the second operand we read). If the second pointer reached the $\times$ sign, move to the next step.
    \item At this step we have a summation problem with $k$ summands, where the summands are written in reverse and also contain carry digits that we should ignore. We move each pointer to the least significant digit of its respective summand, read all digits, compute the sum and the carry, and move each pointer to the right and skip carry digits. We continue until we reach the most significant digit of all summands, and then output an $\ANS$ token.
    \item Finally, we have the answer written in reverse with carry digits. We move the first pointer to the $\ANS$ token, then move it one token to the left and output the tokens read by the pointer, skipping carry digits.
\end{enumerate}

\paragraph{Tower of Hanoi.} 
The Tower of Hanoi puzzle can be solved by a simple recursive algorithm. Let $n$ be the number of disks in the puzzle. The recursive algorithm involves three steps: (1) recursively moving the top  $n-1$ disks from rod  $A$ to rod  $B$; (2) moving the largest disk from rod $A$ to rod $C$; and (3) recursively moving the $n-1$ disks from rod $B$ to rod $C$. Therefore, the puzzle can be solved with $2^n-1$ moves. This algorithm can also be stated iteratively: 
\begin{itemize}
    \item At the first step, the smallest disk is moved from rod $A$ to rod $B$ or $C$ depending on whether $n$ is even or odd, respectively. 
    \item At the second and other even steps, the only legal move not involving the smallest disk is performed.
    \item At the third and other odd steps, the smallest disk is moved to the rod it was not on two turns ago.  
\end{itemize}
Our model uses the iterative algorithm described above to solve the puzzle. The model uses pointer-based memory with one pointer. At each step of the algorithm, the model outputs the next move and the subsequent state of the rods. Specifically, the model takes the following steps:
\begin{enumerate}
    \item The pointer traverses rod $A$ from its base, reading disks one by one and outputting $B$ and $C$ alternatively. This step computes the parity of $n$, which is crucial for the first move. After reaching the end of rod $A$, the pointer rewinds to the beginning of the current state representation. The model is now ready to predict the next move. 
    \item At this step, the model outputs the next move, e.g., $(5)AC\$$\footnote{\$ is used after each move as a delimiter.} (meaning disk (5) is moved from rod $A$ to rod $C$). Next, the model outputs the new state of the puzzle and goes to the step described below. In case that all disks are on rod $C$, no move is predicted and the output ends at this step.
    \item The model uses the pointer to output the new state. The pointer starts by reading the previous state one disk at a time, outputting the new state. Note that the new state differs from the previous state only by the position of a single disk. After processing the previous state, the pointer is then advanced past the outputted move (e.g., (5)AC\$) to position itself at the beginning of the newly generated state. The model then goes back to the move prediction step above. 
\end{enumerate}
We also trained our models using the recursive algorithm; however, their length generalization performance was weaker. Details of this experiment are presented in Appendix \ref{app:hanoi_details}.

\paragraph{Logical Reasoning.} We use a search-based memory tool to solve the logical reasoning problem detailed in Appendix \ref{app:logical_reasoning}. We try to resolve variables' truth value recursively using depth-first-search (DFS). Namely, starting with the output variable, we recursively search for the values of variables in a given expression. If we find a variable with a boolean (True/False) value, we update the expression, replacing the variable's name by its value. If we find a child variable that is still not resolved, we search for the variables in the child's expression, while also logging the value of the parent variable (that we can use for ``backtracking''). When we are done resolving a variable's value, we backtrack to its parent, trying to resolve the parent's value. When we resolved the output nodes value, we finish the generation.

\subsection{Tower of Hanoi Experiment Details}

\begin{table}[t]
\caption{Experimental results for the Tower of Hanoi puzzle solved recursively by different models.}
\label{tab:results_hanoi_recursive}
\begin{center}
\begin{tabular}{ll}
Model & Hanoi (recursive)\\
\hline
Mamba & 7$\to$8 (100\%)\\
LSTM  & \textbf{7$\to$9 (83\%)}\\
GRU &  7$\to$8 (100\%)\\
\hline
Pythia  & 7$\to$7 (100\%)\\
Mistral & 7$\to$8 (87\%)\\
\end{tabular}
\end{center}
\end{table}

\label{app:hanoi_details}
In contrast to other tasks presented in this paper, the solution length for the Tower of Hanoi puzzle scales exponentially with the number of disks. In particular, solving the puzzle for 9 and 12 disks would require over 42,000 and 385,000 tokens, respectively. A solution is considered correct only if all of its tokens are correctly generated. As a result, even if a model has $99\%$ token accuracy, it may not show any length generalization capability. In agreement with this intuition, we found that Hanoi length generalization performance is very sensitive to the random seed. Hence, we used 10 seeds for Mamba and Pythia models and 3 seeds for the rest of the models. In our experiments with Mamba models, we noticed that token accuracy is always high (e.g., $\geq 99.75\%$ for 12 disks), however, the actual accuracy varies based on the seed. The performance of our 10 seeds (all trained on puzzles with up to 8 disks) is shown in Figure \ref{fig:hanoi_seed_sensitiviy}. We note that Pythia did not show any length generalization, in particular, even the token accuracy did not exceed $93\%$.

\begin{figure}[h]
    \centering
    \includegraphics[width=0.6\linewidth]{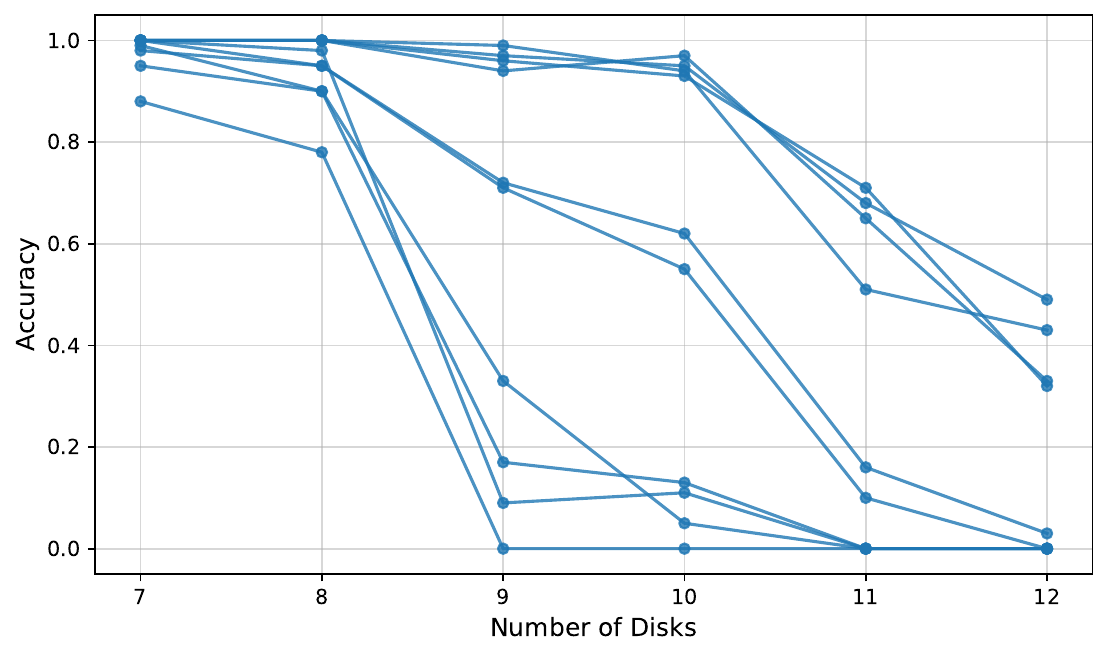}
    \caption{Performance of different seeds when training a Mamba model on Tower of Hanoi puzzles with up to 8 disks.}
    \label{fig:hanoi_seed_sensitiviy}
\end{figure}

As discussed in Appendix \ref{app:cot_algorithms}, the Tower of Hanoi puzzle can also be solved recursively. We also tried the recursive variant of the algorithm, however, its length generalize performance was weaker than the iterative algorithm. Here, we include the implementation and results of the recursive implementation.

 \paragraph{Recursive Implementation.} For a puzzle with $n$ disks, the model outputs the list of moves for puzzles of size 1, 2, \dots, $n$ sequentially and uses the move list generated for puzzle of size $i-1$ to output the list of moves for puzzle of size $i$ using the recursive pattern. The model uses pointer-based memory with two pointers. While outputting the list of moves for the puzzle of size $i$, the first pointer points to the $i$th disk (largest disk moved in the moves of the puzzle of size $i$). The second pointer is used for implementing the recursive pattern and iterating the moves of the puzzle of size $i-1$ while generating the moves for size $i$.  More precisely, the input gives the list of disks (e.g., $(7)(5)(2)$) and the model executes the following steps:
\begin{enumerate}
    \item Both pointers are moved to the smallest (top) disk, and the model outputs the first move, i.e., moving the top disk from rod $A$ to rod $C$. This solves the problem for a single disk. First pointer moves one step back (now pointing to the second smallest disk) and the second pointer advances, pointing to the beginning of the first move. The model is now ready to output the moves for solving the puzzle for two disks. 
    \item At this step, the model copies the last list of moves, swapping rod labels $B$ and $C$. To achieve the latter, the second pointer traverses the last list of moves and the model reads and outputs one token at a time (performing the swap if needed). This step corresponds to the first step of the recursive algorithm. At the end of copying, the second pointer is rewound to point to the beginning of the list of moves again. 
    \item Next, the middle move, i.e., moving the largest disk ($i$th disk while outputting the moves of size $i$) is performed. This disk is identified by the value of the first pointer and the move is always from rod $A$ to $C$. This step corresponds to the second step of the recursive algorithm. 
    \item Similar to step 2, the model copies the list of moves again, swapping $B$ and $A$. The second pointer is used again for iterating the list of moves and copying. This step corresponds to the third step of the recursive algorithm. After the copying is finished, the second pointer advances and points to the beginning of the newly constructed move list. The first pointer goes one step back so that it points to the next larger disk. This completes the generation for size $i$, and the process iterates by returning to step 2 for size  $i+1$. The generation terminates if there is no disk remaining for the first pointer, indicating that the lists of moves have been generated for all puzzle sizes $1, \ldots, n$.\footnote{We note that we use a delimiter (e.g., $\#$) between the list of moves for different number of disks so that they become separable.} 
\end{enumerate}
The performance of the recursive solution for the Tower of Hanoi puzzle is reported in Table \ref{tab:results_hanoi_recursive}.

\subsection{SSMs and Transformer Baselines}
\begin{figure}[t]
    \centering
    \begin{subfigure}[b]{0.3\textwidth}
        \centering
        \includegraphics[width=\textwidth]{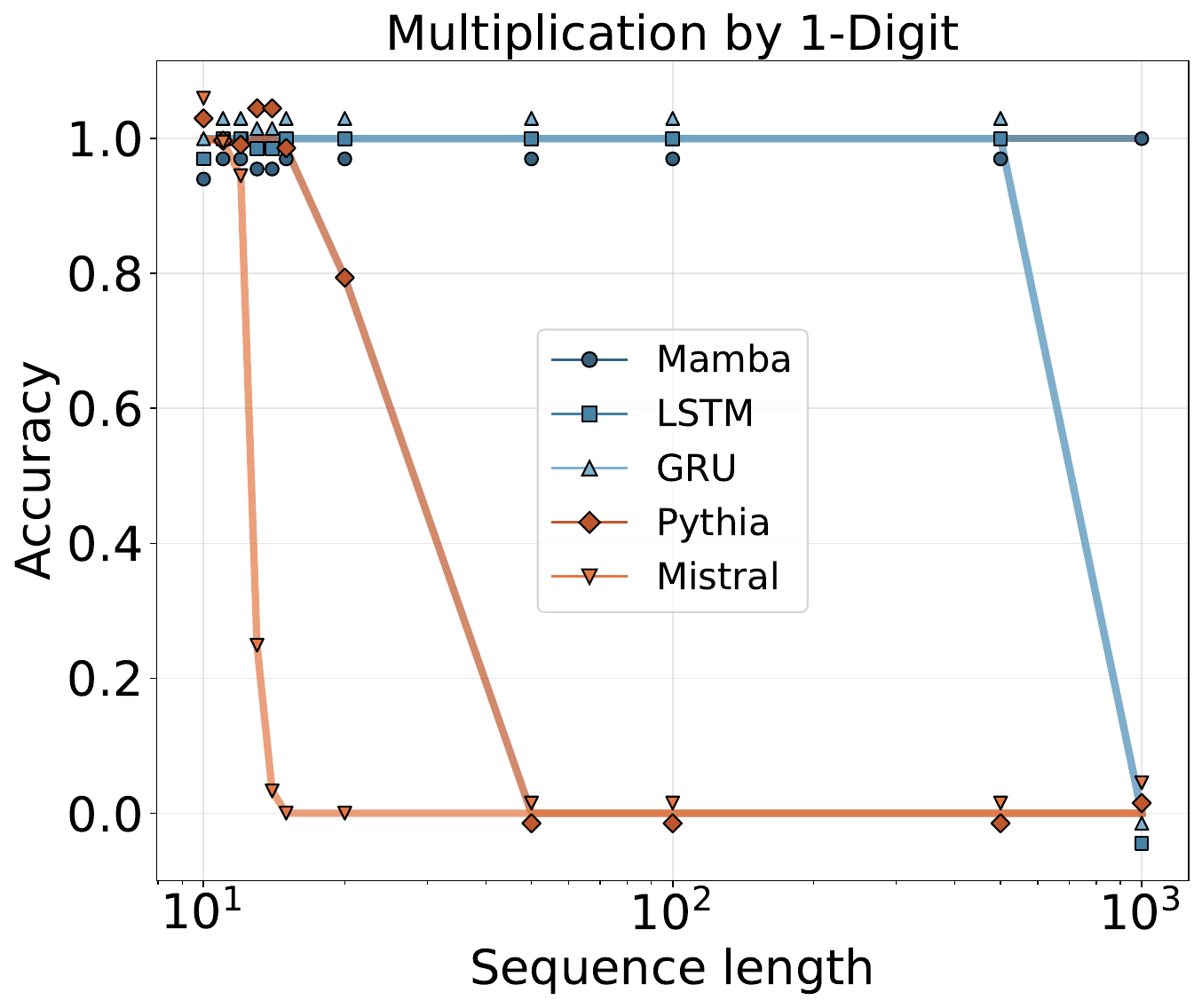}
        \label{}
    \end{subfigure}
    \hspace{0.02\textwidth} 
    \begin{subfigure}[b]{0.3\textwidth}
        \centering
        \includegraphics[width=\textwidth]{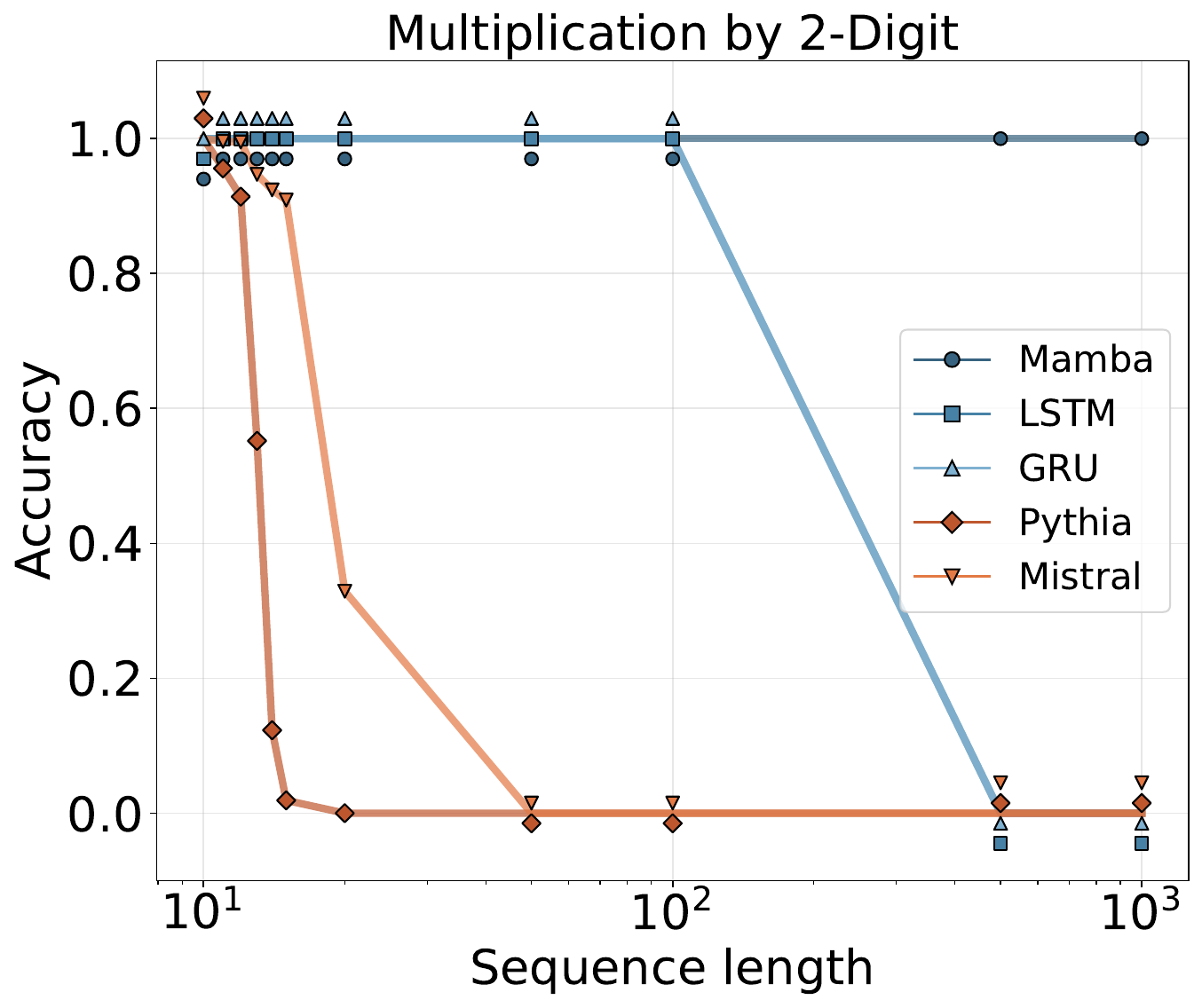}
        \label{}
    \end{subfigure}
    
    \vspace{0.5em} 

    \begin{subfigure}[b]{0.3\textwidth}
        \centering
        \includegraphics[width=\textwidth]{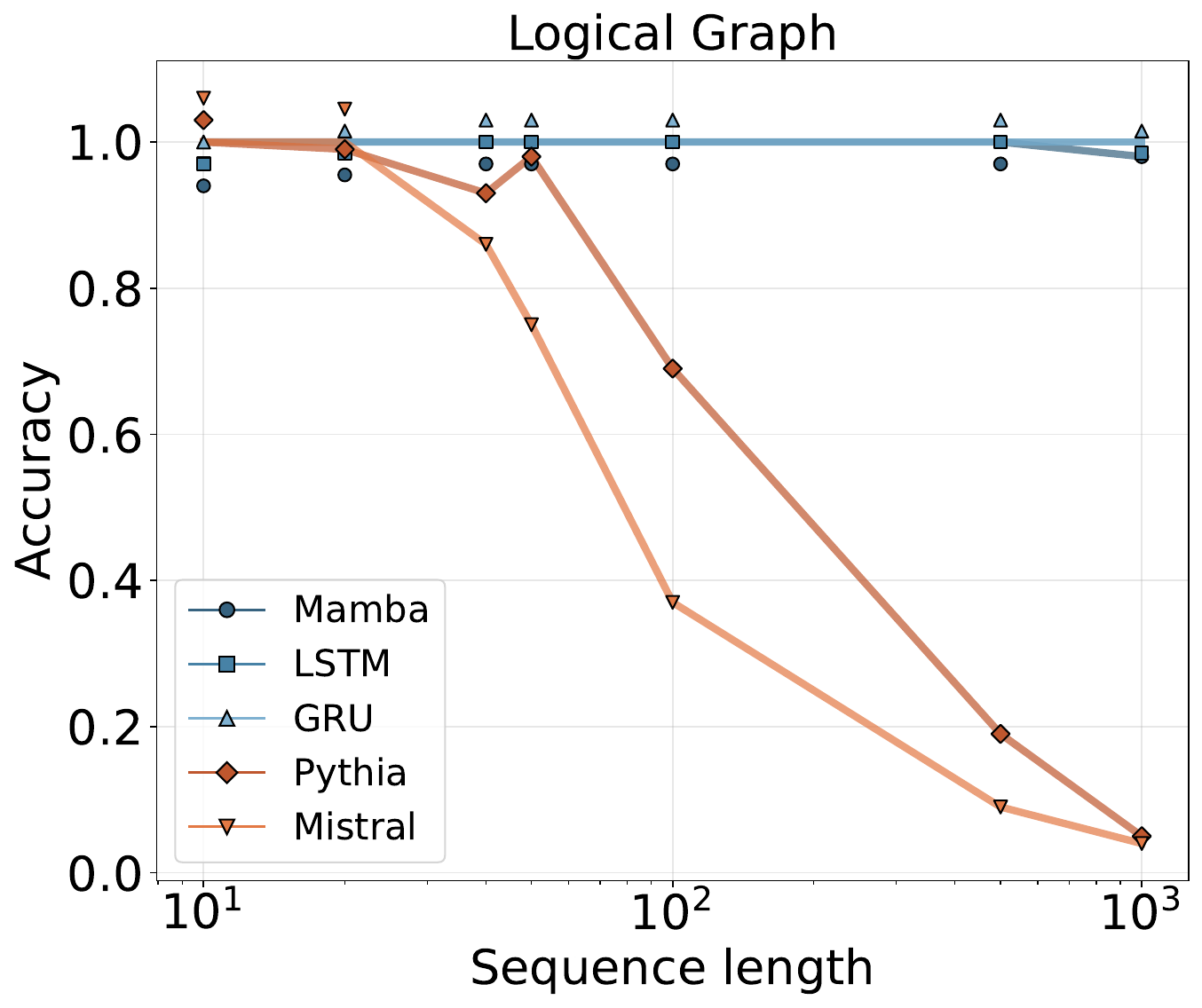}
        \label{}
    \end{subfigure}
    \hspace{0.02\textwidth} 
    \begin{subfigure}[b]{0.3\textwidth}
        \centering
        \includegraphics[width=\textwidth]{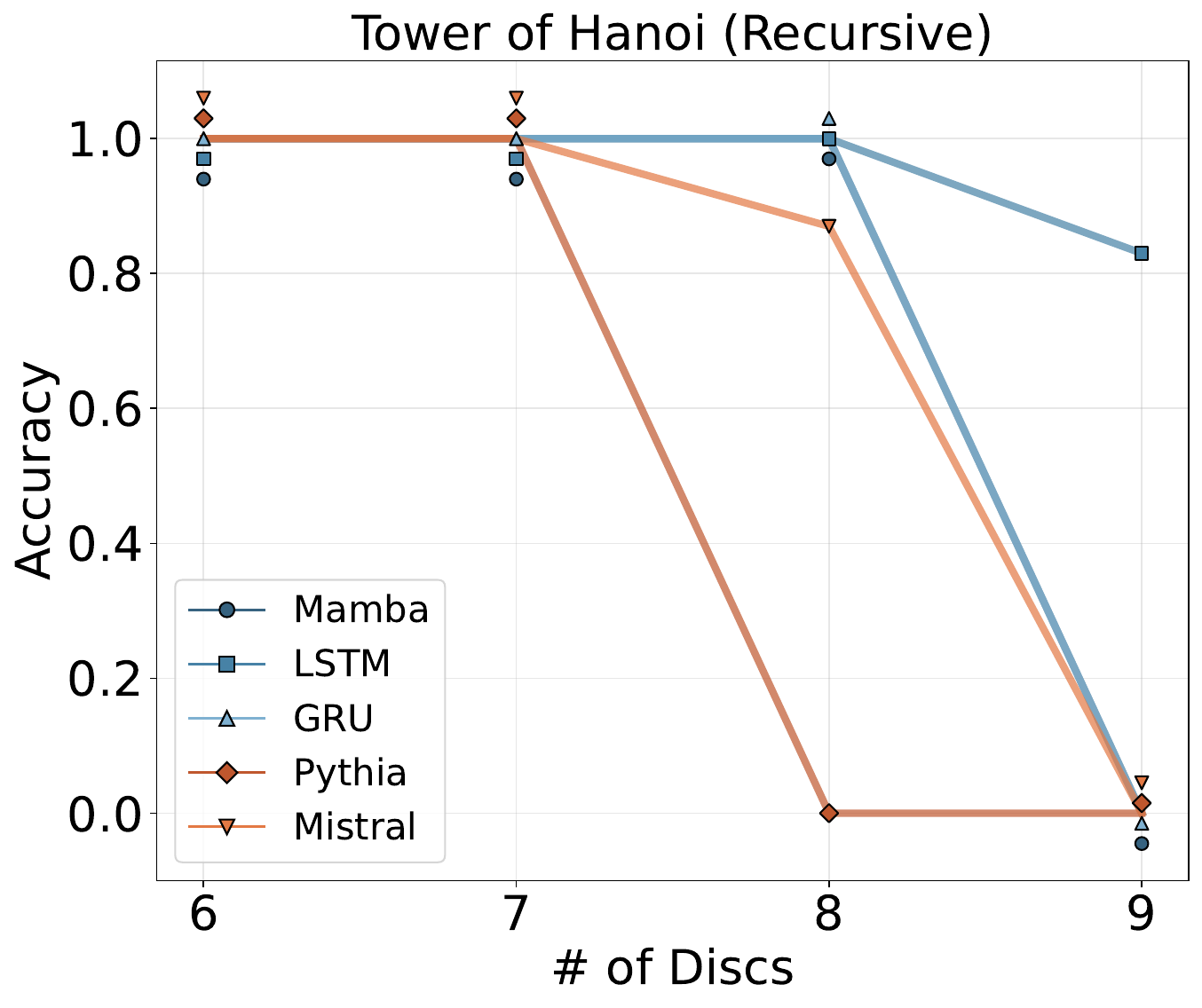}
        \label{}
    \end{subfigure}

    \caption{
        We train various transformer (Pythia, Mistral) and SSM (Mamba, LSTM, GRU) models on Multi-Digit Multiplication, Logical Graph and Tower of Hanoi tasks, with CoT + pointer-based memory tool. 
        \textbf{Multi-Digit Multiplication:} We train models on multiplying a number of up to $10$-digit by a 1-digit number or 2-digit number, using the pointer-based memory tool. 
        \textbf{Logical Graph:} We train models to perform a logical graph reasoning problem using search-based memory tool, training on graphs with up to $10$ variables. 
        \textbf{Tower of Hanoi:} We train models to solve the \emph{Tower of Hanoi} (recursive implementation) reasoning problem using search-based memory tool, training on problems with up to $7$ disks. 
        The first point in each plot is the maximal problem size seen during training (i.e., all other points are out-of-distribution extrapolation).
    }
    \label{fig:all_tasks}
\end{figure}

In Figure \ref{fig:all_tasks} we report accuracies of our baseline models on the Multi-Digit Multiplication, logical reasoning and Tower of Hanoi tasks. We train each model on the same trajectories, using CoT and tool use. We perform hyperparameter optimization for each model as described in \ref{app:experiments_details}. Our results generally point to a length generalization advantage for state space models over baseline transformer models.

\subsection{Ablating training steps and digit length for multiplication}\label{app:mult-ablation}

In Figure \ref{fig:mult_ablation} we investigate the impact of different training configurations on generalization for multi-digit multiplication, varying the training budget (250, 500, or 800 steps) and the maximum number of digits seen during training for the first operand (5, 10, or 20). For this experiment, the learning rate was set to 0.003 based on validation. Results indicate that increasing the maximum number of digits shown during training for the first operand improved stablity of OOD generalization consistently, with results improving with more training steps. In particular, training with up to 20 digits improves generalization stability perfectly up to the maximum digit size tested (1000 digits). However, even training with up to 5 and 10 digits show progressive improvements as the number of training steps increases. 

\begin{figure}[t]
    \centering
    \includegraphics[width=0.9\textwidth,clip]{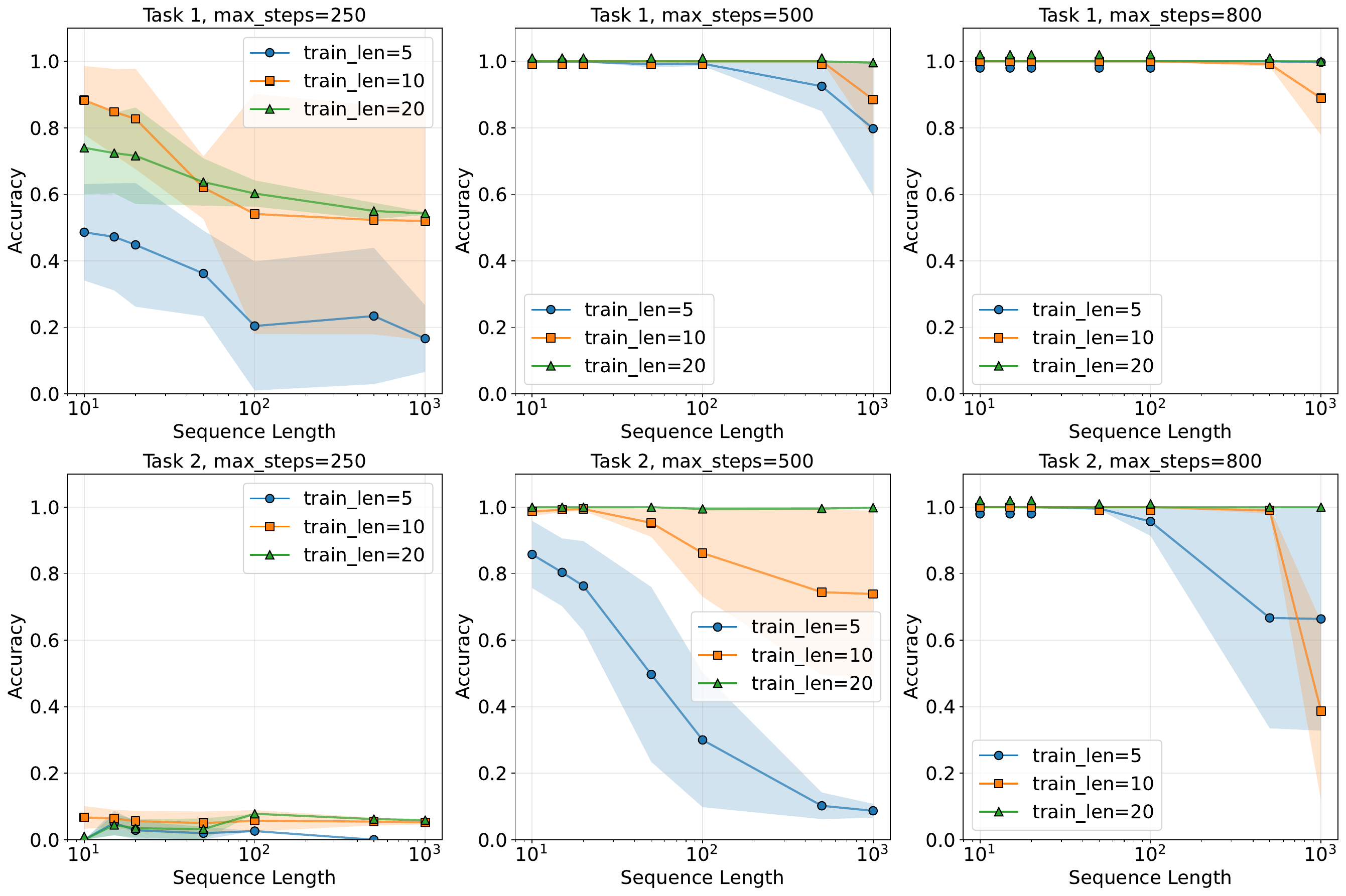}
    \caption{Multiplication generalization performance for Mamba across different training configurations. Each subplot shows accuracy as a function of sequence length for a specific maximum training steps value. Different colored lines represent different training sequence lengths, with error envelope indicating median absolute discrepancy across 5 runs.}
    \label{fig:mult_ablation}
\end{figure}

\subsection{Additional Ablations}
\label{app:ablations}

We run the following ablations on the multi-digit addition task:
\begin{enumerate}
    \item No-CoT: we train the model to directly output the final answer, without any CoT or tool-use.
    \item No-CoT, reversed answer: we train the model to directly output the final answer in reverse (reverse format was shown to improve length generalization in certain settings, e.g. \cite{zhou2024transformers}).
    \item No Tool-Use: The model is trained on similar trajectories as in the main experiment, but now needs to predict the output of the memory tool instead of receiving these as observations. Namely, the trajectory is used as CoT data.
    \item Single-Turn Tool-Use: we train the model with a ``calculator'', where the model needs to generate a single addition command following the input (i.e., given an input $a+b$ it needs to generate $\texttt{add}(a,b)$).
\end{enumerate}

We train the Mamba model in all settings with extensive hyper-parameter tuning on 5-digit addition. Experiments 1,2 and 4 results in perfect accuracy on 5-digit addition, but little to no length generalization. Experiment 3 results in poor performance even in-distribution.

\subsection{Task mixture}\label{app:task-mixture}

In Figure \ref{fig:task-mixture-mult} each panel shows accuracy as a function of test length for various training budgets (250, 500, or 800 steps). The curves correspond to different mixing weights, where $w=0$ denotes the baseline trained only on the main task and higher values indicate a normalized fraction of auxiliary samples. The error bars indicate variability across random seeds. 

\begin{figure}[h]
    \centering
    \includegraphics[width=0.9
    \textwidth]{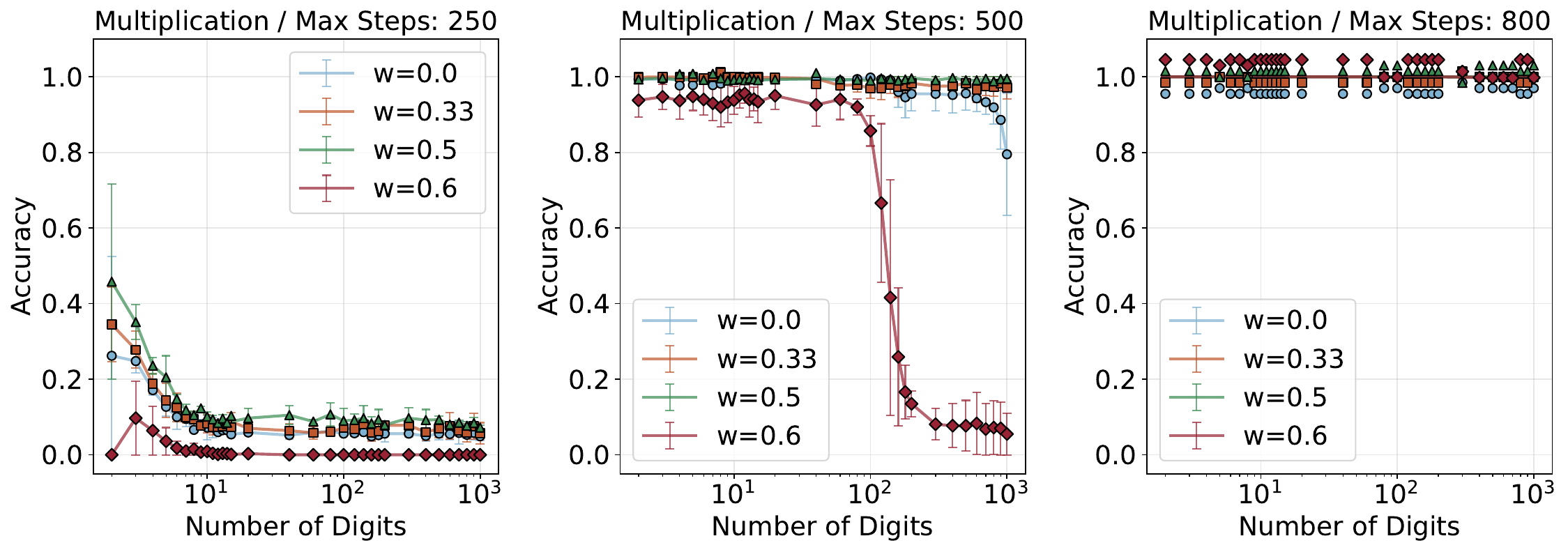}
    \caption{Multiplication task accuracy under co-training with varying training budgets (see Sec \ref{task-mixture}).}
    \label{fig:task-mixture-mult}
\end{figure}

The accuracy plots for the main task (multiplication) in the task mixture experiment were presented in section \ref{task-mixture}. For completeness, we show the auxiliary task accuracy in Figure \ref{fig:task-mixture-add}.
\begin{figure}[htbp]
    \centering
    \includegraphics[width=0.9
    \textwidth]{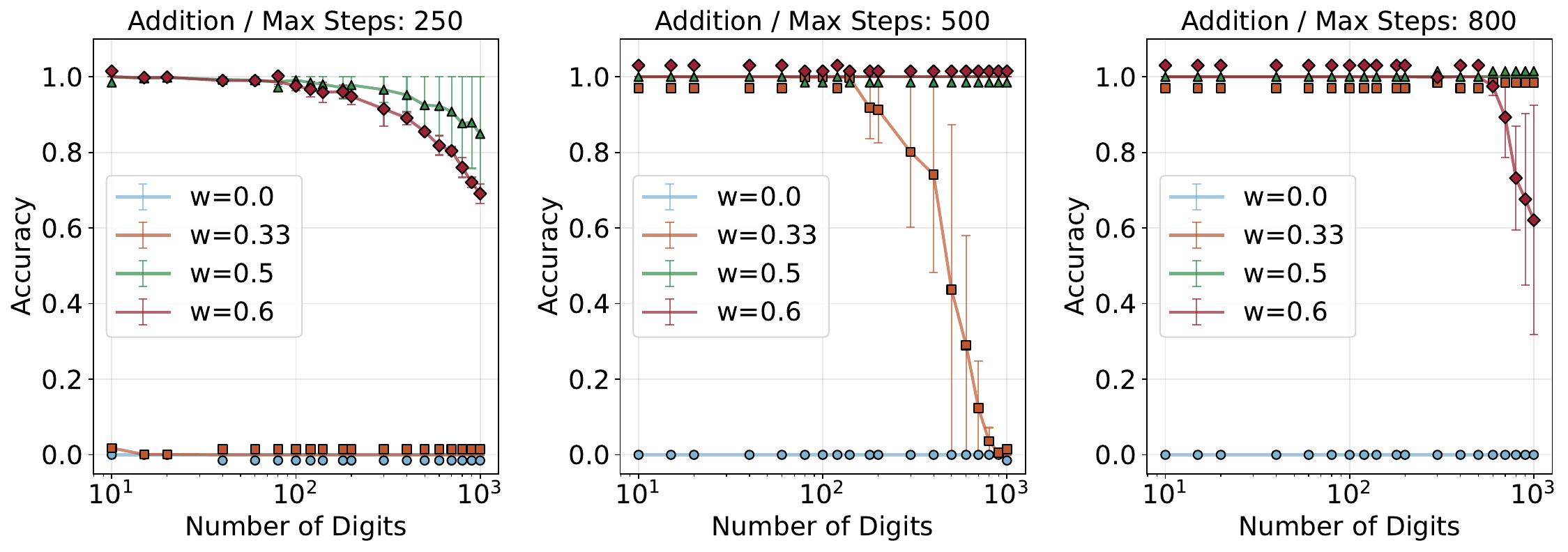}
    \caption{Addition task accuracy under co-training with varying training budgets (250, 500, 800 steps). Curves show different mixing weights. See (\ref{task-mixture}).}
    \label{fig:task-mixture-add}
\end{figure}

\section{Code Fixing Agent Setup}
\label{app:code_fixing}

\begin{figure}
    \centering
    \includegraphics[width=0.8\linewidth]{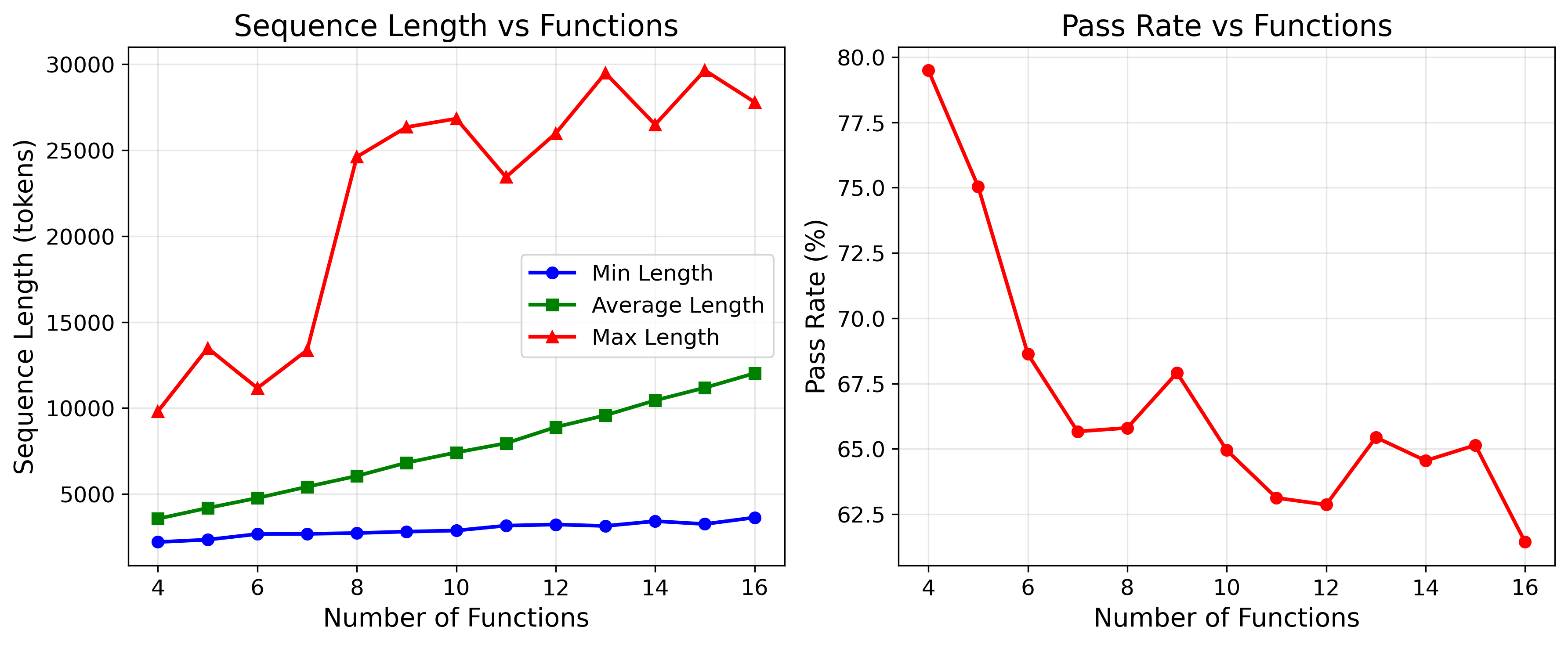}
    \caption{Pass rate and median sequence length for SWE-agent-LM-32B on the code fixing task. }
    \label{fig:swe_gt}
\end{figure}

We use the same system prompt and input prompt as in mini-SWE-agent \citep{yang2024sweagent} from: \url{https://github.com/SWE-agent/mini-swe-agent}. We instruct the model to solve the bug in \texttt{main.py}, and explain how the bug should be solved. We modify the original prompt of mini-SWE-agent to instruct the model interactively debug the code and generate a fix for up to 3 files at a time.

\begin{quote}
\itshape
    Please solve this issue: Fix the bug in main.py. Make sure to pass variable v10 too foo() and all other relevant functions. Pass v10 to ONLY the relevant functions, do not pass it if it is not needed.

    You can execute bash commands and edit files to implement the necessary changes.

    \#\# Recommended Workflow

    This workflows should be done step-by-step so that you can iterate on your changes and any possible problems.

    1. Create a script to reproduce the issue and run it

    2. Spot 3 files that might be causing the issue
    
    3. Read the content of these 3 files.
    
    3. Edit the source code of these files resolve the issue. Do not edit more than 3 files before running the script again, even if the code is not completely fixed.
    
    4. Verify your fix works by running your script again, if not - analyze at most 3 more files that might cause the issue and repeat the debugging process
    
    5. Submit your changes and finish your work by issuing the following command: `echo COMPLETE\_TASK\_AND\_SUBMIT\_FINAL\_OUTPUT`. Do not combine it with any other command. $\langle \mathrm{important} \rangle$After this command, you cannot continue working on this task.$\langle \mathrm{/important} \rangle$
\end{quote}

We plot the pass rate and generated trajectory length of the SWE agent as a function of the number of functions in the code in Figure \ref{fig:swe_gt}.

\section{Tool use capabilities of pretrained SSMs}\label{app:pretrained}

At the time of writing, we were unable to find any publicly-available SSM models that were fine-tuned for function calling. The closest we could find is Mistral's Mamba-Codestral-7B-v0.1, which was fine-tuned on coding tasks. We evaluated this model on the Berkeley Function Calling Leaderboard \citep{patilberkeley}, and found an overall accuracy of 16.58\%, comparable with the reported accuracies of 16.22\% for Falcon3-3B-Instruct and 15.58\% for Llama-3.1-8B-Instruct.

\applefootnote{ \textcolor{textgray}{\sffamily Apple and the Apple logo are trademarks of Apple Inc., registered in the U.S. and other countries and regions.}}

\end{document}